\documentclass{OurArticle}
\usepackage[utf8]{inputenc}
\usepackage{float}
\usepackage{soul}
\usepackage{placeins} 
\usepackage{tikz}
\usetikzlibrary{arrows.meta,positioning,fit,calc}
\usepackage[font=small]{caption} 

\title{GGMPs: Generalized Gaussian Mixture Processes}

\author[1,2,3,4]{Vardaan Tekriwal\thanks{\texttt{vtekriwal@berkeley.edu}}}
\author[5]{Mark D. Risser}
\author[1,6]{Hengrui Luo}
\author[1]{Marcus M. Noack}

\affil[1]{Applied Math \& Computational Research Division, Lawrence Berkeley National Laboratory, Berkeley, CA}
\affil[2]{Department of Electrical Engineering \& Computer Sciences, University of California, Berkeley, CA}
\affil[3]{Engineering Mathematics \& Statistics Division, University of California, Berkeley, CA}
\affil[4]{Department of Industrial Engineering \& Operations Research, University of California, Berkeley, CA}
\affil[5]{Climate \& Ecosystem Sciences Division, Lawrence Berkeley National Laboratory, Berkeley, CA}
\affil[6]{Department of Statistics, Rice University, Houston, TX}

\begin{document}
\date{}
\maketitle

\begin{abstract}
Conditional density estimation is complicated by multimodality, heteroscedasticity, and strong non-Gaussianity. Gaussian processes (GPs) provide a principled nonparametric framework with calibrated uncertainty, but standard GP regression is limited by its unimodal Gaussian predictive form. We introduce the \emph{Generalized Gaussian Mixture Process} (GGMP), a GP-based method for multimodal conditional density estimation in settings where each input may be associated with a complex output distribution rather than a single scalar response. GGMP combines local Gaussian mixture fitting, cross-input component alignment and per-component heteroscedastic GP training to produce a closed-form Gaussian mixture predictive density. The method is tractable, compatible with standard GP solvers and scalable methods, and avoids the exponentially large latent-assignment structure of naive multimodal GP formulations. Empirically, GGMPs improve distributional approximation on synthetic and real-world datasets with pronounced non-Gaussianity and multimodality.
\end{abstract}

\section{Introduction}
Gaussian processes (GPs) constitute a powerful framework for stochastic function approximation and uncertainty quantification, enabling principled Bayesian inference over functions while remaining nonparametric. Under a Gaussian likelihood, the latent function marginalizes analytically, yielding a closed-form log marginal likelihood for hyperparameter training and a Gaussian posterior predictive distribution at new inputs. The standard GP equations used throughout this paper are collected in 
Appendix~\ref{app:gp_background} for reference.

In this paper, we extend the GP framework
for settings where the conditional output distribution at each
input is  multimodal rather than a unimodal Gaussian. 
Specifically, we begin by describing a  multimodal
GP model in which each input is associated with a $K$-component
Gaussian mixture whose component means are governed by independent
latent functions with GP priors. In this case, we show that the resulting joint
likelihood is a mixture of $K^N$ Gaussian terms, which is computationally prohibitive even for modest datasets, motivating the need for a tractable alternative.

\begin{figure}
 \centering
 \includegraphics[width=\linewidth]{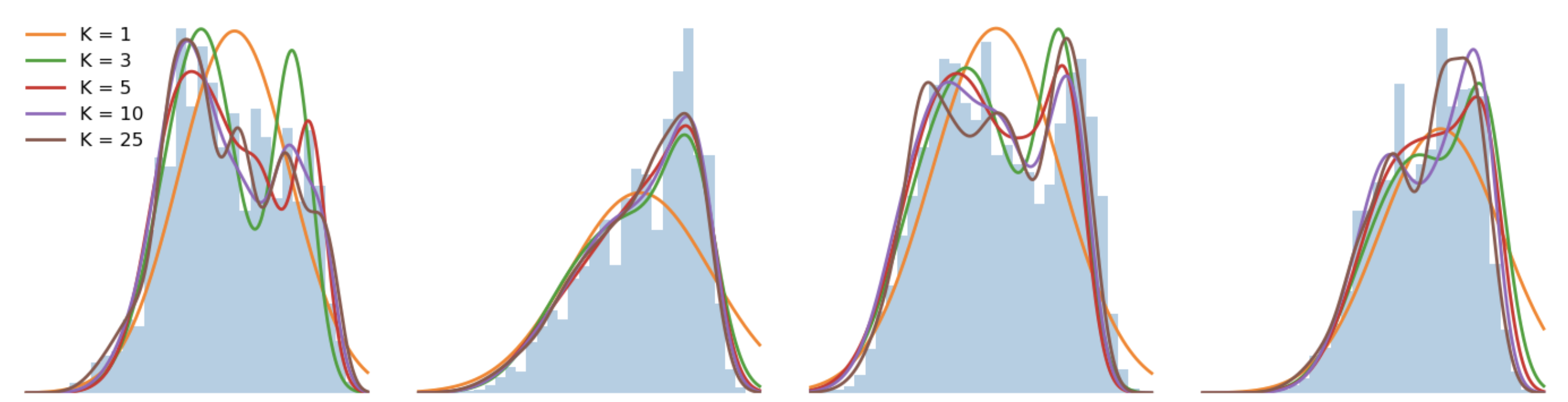}
 \caption{Generalized Gaussian Mixture Processes (GGMPs) capture multimodal conditional distributions while preserving closed-form inference. Many real-world processes exhibit conditioning-dependent  multimodality, asymmetry, and heteroscedasticity that standard GPs cannot represent. GGMPs address this by modeling $p(y \mid x)$ as a weighted mixture of GPs. Here, predictive densities at four held-out inputs are shown against empirical conditional distributions (blue histograms). With the number of components in the mixture $K{=}1$ (orange), GGMPs reduce to a standard heteroscedastic GP. Increasing $K$ progressively resolves multimodal structure, with the model remaining robust to overspecification as surplus components overlap with existing ones.}
 \label{fig:intro_predictive_fits}
\end{figure}

We then present the Generalized Gaussian Mixture Process (GGMP), an alternative that targets distribution-valued observations while retaining closed-form Gaussian conditioning, exemplified in Figure~\ref{fig:intro_predictive_fits}. The contributions of this work are as follows.

\begin{itemize}
 \item We introduce the GGMP as a computationally efficient, tractable alternative to the naive  multimodal GP model whose joint likelihood contains $K^N$ terms. The resulting three-stage pipeline---local Gaussian mixture fitting with cross-input component alignment, per-component heteroscedastic GP training (parallelizable across components), and weight optimization---reduces the effective complexity to $O(KN^3)$ while preserving a closed-form Gaussian-mixture predictive density. Here $N$ is typically much smaller than the total observation count, as the local mixture step summarizes repeated observations at each input. Beyond this inherent data compression, because each stage operates through standard GP primitives, the framework is compatible with any GP scaling method.

 \item We establish theoretical foundations for the framework. First, we propose a distributional maximum likelihood objective for distribution-valued data and show that it is equivalent, up to an additive constant, to minimizing the sum of forward KL divergences between observed and predicted densities. Second, we show that the GGMP family is a universal conditional density estimator: even under simplifying restrictions such as equal component weights and shared variance, it can approximate any continuous conditional density arbitrarily well.

 \item Alongside the theoretical guarantees, we demonstrate empirically that the modeling choices used to make the framework tractable—including local mixture fitting, cross-input alignment, plug-in heteroscedastic variances, and restricted weight parameterizations—are practically effective. Across synthetic and real-world datasets, and through targeted ablations of these simplifications, GGMPs recover multimodal conditional structure faithfully. 
\end{itemize}

\section{Related Work}

\subsection{Likelihood and prior transformations}

One classical route to flexible predictive distributions applies an invertible transformation to the response so
that a Gaussian likelihood is adequate in the transformed space (e.g.,
Box--Cox \cite{box1964analysis}, Sinh--Arcsinh \cite{jones2009sinh}); another
retains a latent GP with a non-Gaussian likelihood for counts, binary
outcomes, or heavy-tailed noise
\cite{williams2006gaussian,diggle1998model,shah2014student}. Generalized Gaussian process models provide a unifying exponential-family formulation for such settings \cite{chan2011generalized,shang2013approximate}. These approaches break
conjugacy, requiring variational \cite{titsias2009variational}, Laplace
\cite{williams1998bayesian}, EP \cite{minka2001expectation}, or MCMC
\cite{neal1997monte} approximations, and none can represent  multimodality
under monotone transforms. Warped GPs \cite{lazaro2012bayesian}, deep GPs
\cite{damianou2013deep,salimbeni2017doubly,noack2024unifying}, and normalizing-flow GPs
\cite{maronas2021transforming} transform the latent function for richer
predictive behavior but likewise sacrifice closed-form training. Heteroscedastic GP regression models input-dependent noise via a GP prior on
the log variance \cite{goldberg1998regression,lazaro2011variational}. A
simpler plug-in alternative treats locally estimated variances as fixed noise
levels, avoiding joint inference over a noise process at the cost of not
propagating variance uncertainty.

\subsection{GP mixtures and experts}

A large literature combines multiple GP components within a single predictive framework. Mixture-of-experts models
\cite{jacobs1991adaptive,jordan1994hierarchical,yuksel2012twenty,masoudnia2014mixture},
the Bayesian committee machine \cite{tresp2000bcm}, non-stationary GPs \citep{risser2015regression,risser2025compactly}, infinite mixtures of GP
experts \cite{rasmussen2002infinite}, treed GPs \cite{gramacy2008treed,luo2022sparse,luo2023sharded,luo2024hybrid}, and
sparse gated experts \cite{nguyen2014fast} all combine GP components, primarily to improve scalability, capture nonstationarity, or partition the input space \citep{noack2025gp2scale,luo2024non}. In these models, different GP components are typically activated in different regions of the domain or under latent assignment mechanisms, thus  multimodality at a fixed input arises indirectly. They therefore differ from our setting, where the objective is to model a multimodal conditional distribution at a single input directly and in closed form. OMGP \cite{lazaro2011overlapping} is closer in spirit, as it targets overlapping latent functions and can produce  multimodal predictions explicitly. However, it relies on variational inference with repeated GP factorizations inside the optimization loop, making it computationally prohibitive even for modest datasets.

\subsection{Conditional density estimation and distribution regression}

Classical conditional density estimation includes conditional KDE
\cite{silverman1986density,scott2015multivariate}. GP-based variants include GP-CDE \cite{dutordoir2018gaussian} and logistic GP models
\cite{lenk1988logistic,tokdar2007logistic,adams2009gpds}, which require
approximate inference \cite{rothfuss2019conditional}. Neural approaches include MDNs \cite{bishop1994mixture}, which parameterize a Gaussian mixture via a network trained on $(x,y)$ pairs, and conditional normalizing flows
\cite{winkler2019learning,trippe2018conditional}, which learn invertible
transformations of base densities. Relative to the GP-based approach proposed here, these neural methods require iterative optimization without closed-form conditioning, derive uncertainty from learned representations rather than a posterior, and lack an explicit smoothness
prior---a disadvantage when training inputs are scarce
(Section~\ref{sec:additive_manufacturing}). Distribution regression via kernel
mean embeddings \cite{muandet2017kernel,szabo2016learning} predicts embedded
summaries rather than densities, and reconstruction is generally
ill-posed \cite{poczos2013distribution,law2018bayesian}. When each input is
associated with samples or an empirical distribution rather than a single
observation, neither standard GP regression nor MDNs directly apply without
reducing the data to summaries or treating replicates as independent
pairs.

\section{Approach}

\subsection{ multimodal GP Model and Intractability}
\label{sec:intractability}
We model  multimodality by representing the conditional output at each input as a finite Gaussian mixture whose component means are governed by $K$ latent functions.
Denote the inputs by $\{x_n\}_{n=1}^N$.
For each component $k\in\{1,\ldots,K\}$ we introduce an independent latent function $f_k(x)$ with GP prior
\begin{equation}
 f_k(x) \sim \mathcal{GP}\,\big(m_k(x),\kappa_k(x,x')\big),
 \qquad k=1,\ldots,K,
 \label{eq:prior_moe_gp}
\end{equation}
and define $g_{nk} := f_k(x_n)$.
At input $x_n$, the conditional output density is modeled as
\begin{equation}
 \ell(y\mid x_n, \mathbf{g}_n)
 = \sum_{k=1}^{K} w_{nk}\,\mathcal N\,\big(y\mid g_{nk},\sigma^2_{nk}\big),
 \qquad \sum_{k=1}^{K} w_{nk}=1,\; w_{nk}\ge 0 \quad \forall\, n,\; k,
 \label{eq:per_input_mixture}
\end{equation}
where $\mathbf{g}_n:=(g_{n1},\ldots,g_{nK})^\top$ and the weights may depend on the input ($w_{nk}=w_k(x_n)$) to capture changing mixture proportions.

Assuming conditional independence across inputs given the latent functions, the dataset likelihood factorizes as a product of per-input mixtures. Let
$G:=\{g_{nk}\}_{n=1,k=1}^{N,K}$ denote the collection of latent component means at the training inputs. Then
\begin{align}
p(\mathbf{y}\mid G)
=\prod_{n=1}^{N} \ell(y_n\mid x_n, \mathbf{g}_n)
=\prod_{n=1}^{N}\sum_{k=1}^{K} w_{nk}\,
\mathcal N\!\big(y_n\mid g_{nk},\sigma^2_{nk}\big),
\label{eq:prod_of_sums_point}
\end{align}
which is a valid density when $\sum_{k=1}^{K} w_{nk}=1$ for all $n$.

Expanding the product of sums yields a joint Gaussian mixture indexed by an assignment
$r=(r_1,\ldots,r_{N})\in\{1,\ldots,K\}^{N}$:
\begin{align}
p(\mathbf{y}\mid G)
&=\sum_{r\in\{1,\ldots,K\}^{N}}
\prod_{n=1}^{N}
w_{n r_n}\,
\frac{1}{\sqrt{2\pi}\sigma_{n r_n}}
\exp\!\left[-\frac{1}{2\sigma_{n r_n}^2}(y_n-g_{n r_n})^2\right]
\nonumber\\
&=\sum_{r\in\{1,\ldots,K\}^{N}}
\frac{\prod_{n=1}^{N} w_{n r_n}}
{\sqrt{(2\pi)^{N}|\mathbf V_r|}}
\exp\!\left[-\frac{1}{2}(\mathbf y-\mathbf g_r)^\top \mathbf V_r^{-1}(\mathbf y-\mathbf g_r)\right],
\label{eq:EnGPLikelihoodDerivation}
\end{align}
where $\mathbf V_r:=\mathrm{diag}(\sigma^2_{1r_1},\ldots,\sigma^2_{Nr_{N}})$ and
$\mathbf g_r:=(g_{1r_1},\ldots,g_{Nr_{N}})^\top$.
Because the assignment vector $r$ ranges over all of $\{1,\ldots,K\}^N$,
the sum in \eqref{eq:EnGPLikelihoodDerivation} contains exactly $K^N$ terms,
each being an $N$-dimensional Gaussian with diagonal covariance.
Even for moderate $K$ and $N$, direct evaluation or marginalization of the joint likelihood becomes intractable. 

Despite this intractability, the per-input mixture form
\eqref{eq:per_input_mixture} is itself highly expressive. Even under
strong assumptions of equal weights and a shared variance---stronger than we later adopt (Section~\ref{sec:implementation})---the model class can approximate any continuous conditional density to arbitrary accuracy as the number of components grows. We formalize this next.

\begin{proposition}[Universal conditional density approximation]
\label{prop:universality}
Let $\mathcal{X} \subset \mathbb{R}^d$ be compact and let $p^*(y \mid x)$ be a conditional density on $\mathbb{R}$ such that $(x,y) \mapsto p^*(y \mid x)$ is jointly continuous on $\mathcal{X} \times \mathbb{R}$ and the family $\{p^*(\cdot \mid x)\}_{x \in \mathcal{X}}$ is uniformly tight. Then for every $\epsilon>0$ there exist an integer $K$, continuous functions $\mu_1,\ldots,\mu_K:\mathcal{X}\to\mathbb{R}$, and a scalar $\sigma^2>0$ such that, where $D_{\mathrm{TV}}$ denotes the total variation distance,

$$\sup_{x \in \mathcal{X}} \, D_{\mathrm{TV}}\!\left(p^*(\cdot \mid x),\; \frac{1}{K}\sum_{k=1}^{K} \mathcal{N}(\cdot \mid \mu_k(x), \sigma^2)\right) < \epsilon.$$
\end{proposition}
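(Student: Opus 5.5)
The plan is to build the approximant in three stages: Gaussian smoothing, a quantile-based discretization into equal-weight atoms, and a uniform comparison of the two.

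\textbf{Step 1 (smoothing).} For $\sigma>0$ let $p^*_\sigma(\cdot\mid x) = p^*(\cdot\mid x) * \mathcal N(0,\sigma^2)$. By Scheffé's lemma, $D_{\mathrm{TV}}(p^*(\cdot\mid x),p^*_\sigma(\cdot\mid x))\to 0$ pointwise as $\sigma\to 0$. To make this uniform in $x$, I would use joint continuity, uniform tightness and compactness. Fix $\eta>0$ and choose $M$ with $\int_{|y|>M}p^*(y\mid x)\,dy<\eta$ for all $x$. On $\mathcal X\times[-M-1,M+1]$, $p^*$ is uniformly continuous, so for small $\sigma$ we have $|p^*_\sigma(y\mid x)-p^*(y\mid x)|\le \eta+(\text{Gaussian tail mass beyond } 1)\cdot\sup p^*$ on $[-M,M]$. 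Integrating and adding the tail contributions gives $\sup_x D_{\mathrm{TV}}(p^*,p^*_\sigma)<\epsilon/2$ for some fixed $\sigma$. Note that $\sup p^*<\infty$ on the compact slab by continuity, and the mass outside the slab is $<\eta$ for both densities once $\sigma$ is small.

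\textbf{Step 2 (equal-weight discretization by quantiles).} With $\sigma$ fixed, I need continuous $\mu_k$ with $\frac1K\sum_k\mathcal N(\mu_k(x),\sigma^2)$ close to $p^*_\sigma(\cdot\mid x)$. The natural choice is quantiles. Let $F_x$ be the CDF of $p^*(\cdot\mid x)$ and set $\mu_k(x)=F_x^{-1}\big(\tfrac{k-1/2}{K}\big)$, or better an average of the quantile function over $[\tfrac{k-1}{K},\tfrac kK]$. The empirical-type measure $\nu_x=\frac1K\sum_k\delta_{\mu_k(x)}$ is then within Wasserstein-1 distance $O(\text{tail}+2M/K)$ of $p^*(\cdot\mid x)$, uniformly in $x$. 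Here I use quantile coupling: $W_1=\int_0^1|F_x^{-1}(u)-\bar q_k|\,du$, and within each bin the oscillation of the quantile function sums to at most $2M$ on the non-tail bins, while the extreme bins are controlled via tightness. The outermost bins may carry unbounded quantiles, so I would instead clip the quantile function to $[-M,M]$, costing at most $O(\eta)$ in TV after smoothing. Then convolution with $\mathcal N(0,\sigma^2)$ turns $W_1$ into TV: $D_{\mathrm{TV}}(\mu*\phi_\sigma,\nu*\phi_\sigma)\le W_1(\mu,\nu)\cdot\|\phi_\sigma'\|_{L^1}/2 = W_1/(\sigma\sqrt{2\pi})$. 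Choosing $K$ large makes this $<\epsilon/2$ uniformly, and the triangle inequality finishes the argument.

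\textbf{Step 3 (continuity of $\mu_k$), the main obstacle.} The quantile function need not be continuous in $x$ when $p^*(\cdot\mid x)$ vanishes on intervals, since the CDF then has flat parts. I would first replace $p^*$ by $(1-\eta)p^*+\eta\,\mathcal N(0,1)$. This is within TV $\eta$ uniformly, remains jointly continuous and tight, and has a strictly positive density. Its CDF $F_x(y)$ is jointly continuous, by dominated convergence with tightness, and strictly increasing in $y$. Hence $(x,u)\mapsto F_x^{-1}(u)$ is continuous on $\mathcal X\times(0,1)$ by a standard inverse-function/compactness argument. The bin averages of the clipped quantile function are then continuous in $x$ by dominated convergence, since the clipped values are bounded. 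Combining Steps 1 and 2 applied to this modified density, with $\eta$ absorbed into $\epsilon$, yields the claim.
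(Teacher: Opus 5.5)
Your proof is correct, and it takes a genuinely different route from the paper's. The paper never takes quantiles of $p^*(\cdot\mid x)$ at a general $x$. Instead it proceeds in four moves:
\begin{itemize}
\item it covers $\mathcal X$ by finitely many $\rho$-balls on which $x\mapsto p^*(\cdot\mid x)$ moves less than $\epsilon/3$ in TV;
\item it approximates each center $p^*(\cdot\mid x_j^*)$ by a smoothed equal-weight quantile mixture;
\item it glues these with a partition of unity into a mixing measure $\nu_x=\sum_p\alpha_p(x)\delta_{s_p}$ with continuous weights on a fixed finite set $S$;
\item only then does it convert to equal weights via the window functions $w_i^{(m)}$.
\end{itemize}
The paper's $\mu_m(x)=\sum_i s_i w_i^{(m)}(x)$ is exactly $M\int_{(m-1)/M}^{m/M}Q_{\nu_x}(u)\,du$, the bin average of the quantile function of $\nu_x$. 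So your construction is the paper's Step~4 applied directly to the (perturbed, clipped) conditional law, with the cover and partition of unity removed.

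The paper's detour gives continuity and boundedness for free: the cumulative weights $C_i(x)$ are continuous by construction, and the atoms live in a finite set. It also needs Gaussian smoothing only at finitely many centers. Your route costs two extra ingredients: a uniform-in-$x$ approximate-identity estimate, and a continuity argument for $x\mapsto Q_x(u)$, which follows from $L^1$-continuity of $x\mapsto p^*(\cdot\mid x)$ via Scheff\'e. In exchange it gives an explicit formula for each $\mu_k$ and an explicit discretization error $2M/(K\sigma\sqrt{2\pi})$. Your $K$ therefore does not depend on a covering number of $\mathcal X$. The paper's $K$, by contrast, must exceed a multiple of $(P-1)\,\mathrm{diam}(S)/\sigma$, with $P$ as large as $JN_0$.

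A few details need tightening.
\begin{itemize}
\item In Step~1, choose the continuity tolerance after $M$ (e.g.\ $\eta/M$, as the paper does with $\epsilon/(6R)$). A pointwise bound $\eta$ integrated over $[-M,M]$ only gives $2M\eta$.
\item Also in Step~1, the term $\int_{|z|>\delta}p^*(y-z\mid x)\,\phi_\sigma(z)\,dz$ evaluates $p^*$ off the slab, where it need not be bounded. Control it by $\phi_\sigma(\delta)$ pointwise, or by $\Pr(|Z|>\delta/\sigma)$ with $Z\sim\mathcal N(0,1)$ in $L^1$ via Fubini, rather than by $\sup p^*$.
\item The unclipped $W_1$ statement in Step~2 can fail outright, since nothing guarantees a first moment. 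Your clipping step repairs this: its TV cost is at most the tail mass, and TV contracts under convolution.
\item The $\mathcal N(0,1)$ perturbation is needed only if you use pointwise quantiles. For bin averages, weak convergence $F_{x_n}\to F_x$ already gives $Q_{x_n}(u)\to Q_x(u)$ at every continuity point of $Q_x$. That is all but countably many $u$, which suffices for dominated convergence.
\end{itemize}
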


The proof, given in Appendix~\ref{app:universality}, constructs the approximation via a partition-of-unity argument: at each input $x$, the target density is approximated by a location-scale Gaussian mixture whose means are then extended to continuous functions over the compact domain. While this property is shared by several model classes in the literature that typically require approximate inference, the proposition does not, per se, imply that equal weights or shared variance are statistically optimal at finite $K$ and sample size; it shows that these design choices are parsimonious constraints that preserve representational power while reducing degrees of freedom. Further, its significance is that the GGMP achieves universality \emph{while} retaining closed-form GP conditioning in every component.

This observation motivates the GGMP: rather than approximating the intractable joint model \eqref{eq:EnGPLikelihoodDerivation}, we adopt a different, decoupled model that retains the per-input mixture form \eqref{eq:per_input_mixture} but replaces joint inference over all $K^N$ assignment configurations with $K$ independent GP regressions on deterministically aligned component targets. The resulting framework is therefore not a variational or asymptotic approximation to the joint model; it is a separate model class that retains the same universal approximation property as the restricted per-input mixture family described above.

\subsection{From point observations to distribution-valued data}
\label{sec:point_to_dist}
In many applications---such as stochastic simulation, repeated physical
experiments, or ensemble forecasts---each input $x_n$ is naturally
associated with multiple output observations rather than a single scalar response.
Equations \eqref{eq:prod_of_sums_point}--\eqref{eq:EnGPLikelihoodDerivation}
describe the likelihood of a single scalar observation $y_n$ at each input $x_n$.

However, in our setting, the datum at $x_n$ is a distribution $p_n(y)$, typically represented by $T_n$ samples $\{Y_{nt}\}_{t=1}^{T_n}$ collected at the same input or by a discretized density; for simplicity, we write the formulas below for the balanced case $T_n \equiv T$. The per-input log-likelihood becomes an expectation under $p_n$, approximated in practice by the sample average $\frac{1}{T}\sum_{t=1}^{T}\log p(Y_{nt}\mid\theta)$; for this average to converge as $T\to\infty$, we require
$\mathbb{E}_{p_n}\!\big[\big|\log\!\big(\sum_{k=1}^{K} w_{nk}\,
\mathcal{N}(Y\mid g_{nk},\sigma^2_{nk})\big)\big|\big] < \infty$
for each $n$.

Since a Gaussian mixture density is strictly positive on $\mathbb{R}$, the integrand is bounded above; the condition reduces to $\int p_n(y)\,\log^{-}\!\big(\sum_k w_{nk}\,\mathcal{N}(y\mid g_{nk},\sigma^2_{nk})\big)\,dy < \infty$. Because the density of the Gaussian mixture decays as $\exp(-cy^2)$ in the tails for some $c > 0$, we have $\log^{-}\!\big(\sum_k w_{nk}\,\mathcal{N}(y\mid g_{nk},\sigma^2_{nk})\big) = O(y^2)$ as $|y| \to \infty$, so the integrability condition holds whenever $p_n$ has a finite second moment, i.e. $\int p_n(y)\, y^2\, dy < \infty$. This is substantially weaker than sub-Gaussianity. In the empirical setting where $p_n$ is represented by finitely many samples, the condition is satisfied automatically.

Under the per-input mixture model \eqref{eq:per_input_mixture}, the sample log-likelihood is
\begin{equation}
\mathcal L_T
=
\sum_{n=1}^{N}\sum_{t=1}^T
\log\!\Bigg(\sum_{k=1}^{K} w_{nk}\,
\mathcal N\!\big(Y_{nt}\mid g_{nk},\sigma^2_{nk}\big)\Bigg).
\label{eq:sample_ll}
\end{equation}
Dividing by $T$ and taking $T\to\infty$ yields the population objective
(by the strong law of large numbers, applied to each summand $n$ separately):

\begin{equation}
\frac{1}{T}\mathcal L_T
\;\xrightarrow{\;\mathrm{a.s.}\;}\;
\sum_{n=1}^{N}
\int p_n(y)\,
\log\!\Bigg(\sum_{k=1}^{K} w_{nk}\,
\mathcal N\!\big(y\mid g_{nk},\sigma^2_{nk}\big)\Bigg)\,dy.
\label{eq:dist_ll_full_form}
\end{equation}
When $p_n$ is represented by samples, the population integral is estimated
by its empirical counterpart. Writing $\hat{P}_n = \frac{1}{T}\sum_{t=1}^T \delta_{Y_{nt}}$
for the empirical measure, the per-input contribution to \eqref{eq:dist_ll_full_form} becomes
\begin{equation}
\int \log\!\Big(\sum_{k=1}^{K} w_{nk}\,\mathcal N(y\mid g_{nk},\sigma^2_{nk})\Big)\,d\hat P_n(y)
=
\frac{1}{T}\sum_{t=1}^{T}\log\!\Big(\sum_{k=1}^{K} w_{nk}\,
\mathcal N(Y_{nt}\mid g_{nk},\sigma^2_{nk})\Big),
\end{equation}
which is the standard sample average log-likelihood at input $x_n$.

The objective \eqref{eq:dist_ll_full_form} is the natural analog of maximum likelihood when each input is associated with an observed distribution rather than a single draw.
The equivalence between maximizing an expected log-likelihood under a reference distribution and minimizing the forward KL divergence is classical; see, e.g., \citet{cover2006elements,burnham2002model}.
We restate it here in the specific form needed for our distribution-valued setting.

\begin{lemma}[Distributional MLE = KL minimization]
\label{distribution_MLE}
Let $(\mathcal Y,\mathcal A,\mu)$ be a measurable space with reference measure $\mu$
(counting measure if $\mathcal Y$ is discrete, Lebesgue measure if $\mathcal Y$ is continuous).
Let $\{(x_n,p_n)\}_{n=1}^N$ be data, where each $p_n:\mathcal Y\to[0,\infty)$ is a probability
density/mass function with respect to $\mu$. Let $q_\theta(\cdot\mid x_n)$ be a model-predicted density/mass function with respect to $\mu$.
Assume all integrals below are finite.

Define the distributional log-likelihood
\[
\mathcal L(\theta)
:=\sum_{n=1}^N \int_{\mathcal Y} p_n(y)\,\log q_\theta(y\mid x_n)\,d\mu(y).
\]
Then
\[
\arg\max_\theta \mathcal L(\theta)
=
\arg\min_\theta \sum_{n=1}^N D_{\mathrm{KL}}\!\big(p_n \,\|\, q_\theta(\cdot\mid x_n)\big),
\]
and for all $\theta$,
\[
\mathcal L(\theta)
=
-\sum_{n=1}^N H(p_n)
-\sum_{n=1}^N D_{\mathrm{KL}}\!\big(p_n \,\|\, q_\theta(\cdot\mid x_n)\big),
\]
where $H(p_n):=-\int_{\mathcal Y} p_n(y)\log p_n(y)\,d\mu(y)$
is the Shannon entropy (discrete case) or differential entropy (continuous case).
\end{lemma}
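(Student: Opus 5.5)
The plan is to prove the pointwise identity for each fixed $\theta$ first; the $\arg\max$/$\arg\min$ equivalence then follows immediately. Fix $n$ and $\theta$. On the set $\{y : p_n(y)>0\}$, write
\[
\log q_\theta(y\mid x_n) = \log p_n(y) - \log\frac{p_n(y)}{q_\theta(y\mid x_n)}.
\]
Multiply by $p_n(y)$ and integrate against $\mu$. On the complement $\{p_n=0\}$ the integrand $p_n\log q_\theta$ vanishes under the usual convention $0\log 0=0$, so nothing is lost by restricting to the support. The first term gives $\int p_n\log p_n\,d\mu=-H(p_n)$. The second gives $-\int p_n\log(p_n/q_\theta)\,d\mu=-D_{\mathrm{KL}}(p_n\,\|\,q_\theta(\cdot\mid x_n))$.

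Splitting the integral of a difference into a difference of integrals is legitimate because the hypothesis ``all integrals below are finite'' guarantees that $p_n\log q_\theta$ and $p_n\log p_n$ are both integrable. Their difference, the KL integrand, is therefore integrable as well, and linearity of the integral applies. If $q_\theta$ vanishes on a set of positive $p_n$-mass, then $\mathcal L$ and $D_{\mathrm{KL}}$ are both infinite (with opposite signs), which the finiteness hypothesis excludes; I will mention this explicitly. Summing over $n=1,\dots,N$ yields
\[
\mathcal L(\theta) = -\sum_n H(p_n) - \sum_n D_{\mathrm{KL}}\big(p_n\,\|\,q_\theta(\cdot\mid x_n)\big).
\]

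For the optimization statement, the entropy sum $C:=\sum_n H(p_n)$ depends only on the data and not on $\theta$, and it is finite. Hence $\mathcal L(\theta)=-C-F(\theta)$ with $F(\theta)=\sum_n D_{\mathrm{KL}}(p_n\,\|\,q_\theta(\cdot\mid x_n))$. Then $\mathcal L(\theta)\ge\mathcal L(\theta')$ if and only if $F(\theta)\le F(\theta')$, so the two argmax/argmin sets coincide as sets, including when either is empty.

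The only delicate point is the measure-theoretic bookkeeping: zero sets of $p_n$ and $q_\theta$, and the fact that differential entropy may be negative. This is handled by the support restriction and the finiteness assumption; no ordering or sign assumptions are needed. The argument covers the discrete and continuous cases simultaneously, since it uses only the density with respect to the generic reference measure $\mu$.
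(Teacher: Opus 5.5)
Your proof is correct and follows the same route as the paper. Both split the KL integrand into $\log p_n - \log q_\theta$, identify $\int p_n\log p_n\,d\mu = -H(p_n)$, sum over $n$, and observe that the entropy term does not depend on $\theta$. What you add beyond the paper is the explicit bookkeeping it leaves implicit: restricting to the support of $p_n$, handling the zero sets of $q_\theta$, and using the finiteness hypothesis to justify splitting the integral by linearity.
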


The derivation follows by expanding the KL divergence and separating the $\theta$-independent entropy term; we include it in Appendix~\ref{app:proof_prop1} for completeness.

Lemma \ref{distribution_MLE} serves two purposes.
First, it justifies \eqref{eq:dist_ll_full_form} as the likelihood-based objective associated with fitting predictive densities to distribution-valued observations.
Second, it clarifies the learning criterion: we fit $q_\theta(\cdot\mid x_n)$ to be close to $p_n$ in forward KL, aggregated over inputs.
When $p_n$ is available only through samples $Y_{n1},\ldots,Y_{nT_n}$, the integral is approximated by Monte Carlo,
$\int p_n \log q_\theta \approx \frac{1}{T_n}\sum_{t=1}^{T_n} \log q_\theta(Y_{nt}\mid x_n)$,
recovering the usual sample-average log-likelihood estimator.

\subsection{Implementation}
\label{sec:implementation}

The distribution-valued likelihood in \eqref{eq:dist_ll_full_form} is a
tractable and principled objective for fitting predictive densities to
distribution-valued observations. The challenge is instead model
construction: the naive multimodal GP formulation
\eqref{eq:EnGPLikelihoodDerivation} is exponentially intractable, and even
the per-input mixture representation leaves component labels ambiguous
across inputs. A practical global model therefore requires both a
consistent cross-input component correspondence and a choice of whether
weights are shared or input-dependent.

We address these challenges with a three-step pipeline. First, we fit a local Gaussian mixture at each input and align components to establish a consistent labeling. Second, we train one GP per component. Third, we combine the resulting component predictive densities into a global mixture and then weight under the distributional log-likelihood from Lemma~\ref{distribution_MLE}. We consider three weight parameterizations---equal, shared, and input-dependent---and Proposition~\ref{prop:universality} provides theoretical support for these design choices. In particular, because universality requires only that the component means vary with the input, the model can retain full representational capacity even when weights and variances are restricted to be input-independent, acknowledging that such restrictions need not be optimal at finite $K$ or $N$. We assess the practical adequacy of these simplifications in Section~\ref{sec:additive_manufacturing} for plug-in variances and in Section~\ref{sec:weight_effect} for weight parameterizations, and discuss extensions in Section~\ref{sec:limitations}. We now describe each step in turn.

\subsubsection{Local Gaussian Mixture Fitting and Component Alignment}
\label{sec:fitting}

The number of components $K$ is a user-specified structural choice. A detailed analysis of the failure modes is provided in Appendix~\ref{app:K_sensitivity}. In short, under-specification merges modes, inflating within-component variance and placing mass between true modes; over-specification introduces redundant components. In the infinite-data limit, if redundant components converge to identical parameters, the predictive density depends only on their combined weight, so the shared-weight objective becomes flat along redistribution directions. With finite samples, the reduced effective sample size per component can still degrade estimation. We justify the restrictions empirically in Section~\ref{sec:experiments}, and discuss practical selection strategies in Section~\ref{sec:discussion}.

For each input $x_n$, we assume the observed density $p_n$ is represented either by samples $\{Y_{nt}\}_{t=1}^T$
or by a discretized histogram/KDE on a grid.
We fit a $K$-component Gaussian mixture to the corresponding samples at $x_n$, producing
local component parameters
\[
\big\{(\hat\omega_{n\kappa},\,\hat m_{n\kappa},\,s^2_{n\kappa})\big\}_{\kappa=1}^K,
\]
where $\hat\omega_{n\kappa}$ is the fitted local mixture weight, $\hat m_{n\kappa}$ is the fitted component mean, and $s^2_{n\kappa}$ is the within-component variance.

Because a Gaussian mixture is invariant under permutation of its components \citep{kunkel2020anchored},
the local index $\kappa$ returned by the EM algorithm at input $x_n$
carries no cross-input meaning.
That said, to train one GP per mixture component, we require a consistent correspondence
of component labels across the $N$ inputs.

Establishing consistent component labels across mixture fits is a
well-studied problem in the Bayesian mixture literature, where it arises
as the \emph{label-switching} problem: because the mixture likelihood is
invariant under permutation of its components, MCMC samplers explore
symmetric posterior modes and relabel post-hoc to produce
interpretable summaries
\citep{stephens2000dealing,jasra2005markov,papastamoulis2010artificial}.
Common remedies include ordering constraints on component means,
loss-based relabeling against a reference configuration, and pivotal
reordering. Our alignment step is a frequentist counterpart: rather than
relabeling posterior samples within a single mixture, we relabel point
estimates across $N$ independent fits to obtain coherent GP training
targets. The key difference is that our setting exploits spatial
continuity of the latent mean functions across the input domain,
motivating optimal-transport-based costs that measure geometric
dissimilarity between components as distributions rather than purely
label-based matching.

Our default alignment is simple: at each input $x_n$, we sort the fitted components
in ascending order of their means and assign global label $k$ to the component
with the $k$-th smallest mean. Formally, let $\pi_n : \{1,\ldots,K\} \to \{1,\ldots,K\}$
be a permutation satisfying
\begin{equation}
 \hat{m}_{n,\pi_n(1)} \;\le\; \hat{m}_{n,\pi_n(2)} \;\le\; \cdots \;\le\; \hat{m}_{n,\pi_n(K)},
 \label{eq:sort_alignment}
\end{equation}
and relabel $\hat{m}_{nk} := \hat{m}_{n,\pi_n(k)}$,\;
$s^2_{nk} := s^2_{n,\pi_n(k)}$ for all $k$.

This choice is motivated by the theory of optimal transport in one dimension:
for univariate distributions, the Wasserstein-$p$ optimal coupling for any $p \ge 1$
is the monotone (quantile) coupling, which preserves rank ordering
\citep{villani2003topics}.
When the $K$ component means at each input are viewed as a discrete point configuration
on $\mathbb{R}$, sorting produces the assignment that minimizes the total squared displacement
$\sum_{k=1}^K (\hat{m}_{nk} - \hat{m}_{n'k})^2$ between any pair of inputs,
provided the rank order of the true component mean functions is preserved across inputs.
This condition holds exactly when $f_1(x) < f_2(x) < \cdots < f_K(x)$ for all $x$,
and approximately when crossings are infrequent relative to inter-component separation.

If two component mean functions do cross---i.e.,
$f_j(x') < f_k(x')$ but $f_j(x'') > f_k(x'')$ for some inputs $x' < x''$---then
sorting by local means swaps the global labels of these components on either side
of the crossing, creating a discontinuity in the GP training targets.
In such cases, sorting can be replaced by a sequential matching procedure:
fix an ordering at a reference input $x_1$ and, for each subsequent input $x_n$,
solve the linear assignment problem
\begin{equation}
 \pi_n = \arg\min_{\pi \in S_K} \sum_{k=1}^{K}
 c\,\big(\mathcal{N}(\hat{m}_{n-1,k},\, s^2_{n-1,k}),\;
  \mathcal{N}(\hat{m}_{n,\pi(k)},\, s^2_{n,\pi(k)})\big),
 \label{eq:hungarian_alignment}
\end{equation}
where $S_K$ is the symmetric group on $K$ elements and
$c(\cdot,\cdot)$ is a dissimilarity measure.
A natural choice is the squared $\mathcal{W}_2$ Wasserstein distance,
which for two multivariate Gaussians takes the form
\begin{equation}
 c\big(\mathcal{N}(\boldsymbol{\mu}_1, \boldsymbol{\Sigma}_1),\,
  \mathcal{N}(\boldsymbol{\mu}_2, \boldsymbol{\Sigma}_2)\big)
 = \|\boldsymbol{\mu}_1 - \boldsymbol{\mu}_2\|^2
 + \mathrm{tr}\,\big(\boldsymbol{\Sigma}_1
 + \boldsymbol{\Sigma}_2
 - 2(\boldsymbol{\Sigma}_1^{1/2}\,\boldsymbol{\Sigma}_2\,
  \boldsymbol{\Sigma}_1^{1/2})^{1/2}\big),
 \label{eq:W2_cost}
\end{equation}
though the costs such as the squared Hellinger distance may also be used.
For small $K$, \eqref{eq:hungarian_alignment} can be solved in $O(K^3)$ time \citep{kuhn1955hungarian},
making the total alignment cost $O(NK^3)$.

The sorting alignment \eqref{eq:sort_alignment} exploits the total order on $\mathbb{R}$ and is specific to univariate outputs. For multivariate outputs $y \in \mathbb{R}^p$, no canonical ordering of the component means exists. For multivariate outputs, we order inputs lexicographically and apply the sequential matching procedure \eqref{eq:hungarian_alignment} with the multivariate cost \eqref{eq:W2_cost}; this provides a dimension-agnostic alternative that requires no changes to the GP training or weight optimization stages. Experiments in Section~\ref{sec:experiments} demonstrate both cases $p = 1$ and $p > 1$. In the multivariate case, the local GMM fit produces mean vectors $\hat{\boldsymbol{m}}_{n\kappa} \in \mathbb{R}^p$ and covariance matrices $\mathbf{S}_{n\kappa} \in \mathbb{R}^{p \times p}$; the full multivariate predictive density is derived in Appendix~\ref{app:multivariate}.

\subsubsection{Heteroscedastic Component GP Training and Weight Optimization}
\label{sec:training}
Given the alignment permutations $\{\pi_n\}_{n=1}^N$,
we define the $k$-th GP training set as
\begin{equation}
 \mathcal{D}_k := \{(x_n, \hat{m}_{nk}, s^2_{nk})\}_{n=1}^{N}.
 \label{eq:Dk_def_impl}
\end{equation}
For each $k\in\{1,\ldots,K\}$ we fit $\mathrm{GP}_k$ on $\mathcal D_k$,
treating $s^2_{nk}$ as known heteroscedastic observation-noise variances.
Let $h:=\{h_k\}_{k=1}^K$ denote all GP hyperparameters.
The trained GP has a posterior predictive distribution for the latent component mean at $x_n$,
\begin{equation}
g_{nk}\mid(h_k,\mathcal{D}_k)\sim \mathcal{N}\,\big(\mu_{nk}(h_k),\,\nu_{nk}(h_k)\big).
\label{eq:gp_post_impl}
\end{equation}
We use the within-component variance as likelihood variance, i.e. we set $\sigma^2_{nk}:=s^2_{nk}$.\footnote{Note that $s^2_{nk}$ reflects the full within-component spread rather than the estimation uncertainty in $\hat m_{nk}$, which is typically smaller and scales like $s^2_{nk}/T_{nk}^{\mathrm{eff}}$, where $T_{nk}^{\mathrm{eff}} := T\,\hat\omega_{nk}$ and $\hat\omega_{nk}$ is the fitted local mixture weight of component $k$ at input $x_n$. Using $s^2_{nk}$ as the GP noise variance therefore treats the aligned component means as noisier observations than they would be under a pure mean-estimation view. An alternative would be to train the GP using $s^2_{nk}/T_{nk}^{\mathrm{eff}}$ and then add back the full $s^2_{nk}$ when forming the predictive density \eqref{eq:q_nk_density_impl}. When $T_{nk}^{\mathrm{eff}}$ is large, the two choices are similar; when $T_{nk}^{\mathrm{eff}}$ is small, the current choice can over-smooth the GP posterior mean. We examine the consequences in Sections~\ref{sec:additive_manufacturing} and~\ref{sec:limitations}.}

Marginalizing out $g_{nk}$ produces an explicit Gaussian component predictive density,
\begin{equation}
\begin{aligned}
q_{nk}(y;h_k)
&:= \int \mathcal{N}\,\big(y\mid g_{nk},\,s^2_{nk}\big)\,
 \mathcal{N}\,\big(g_{nk}\mid \mu_{nk}(h_k),\,\nu_{nk}(h_k)\big)\,dg_{nk} \\
&= \mathcal{N}\,\Big(y \,\Big|\, \mu_{nk}(h_k),\,\nu_{nk}(h_k)+s^2_{nk}\Big).
\end{aligned}
\label{eq:q_nk_density_impl}
\end{equation}
We introduce the simplex notation for the weight collection $W:=\{w_n\}_{n=1}^N$, where each $w_n\in\Delta^{K-1}$ (where $\Delta^{K-1} = \{w \in \mathbb{R}^K : w_{k} \geq 0,\, \sum_k w_{k} = 1\}$). We define the GGMP predictive density at $x_n$ as
\begin{equation}
q_n(y;h,W)=\sum_{k=1}^{K} w_{nk}\,q_{nk}(y;h_k).
\label{eq:ggmp_station_density_k_impl}
\end{equation}
Intuitively, $q_{nk}$ captures \emph{where} the $k$-th mode sits (and how uncertain it is) at input $x_n$,
while $w_n$ controls the mixing proportions of these modes at $x_n$.

Given observed densities $\{p_n\}$, we fit parameters by maximizing
\begin{equation}
\mathcal{L}_{\mathrm{dens}}(h,W)
:= \sum_{n=1}^{N}\int p_n(y)\,\log q_n(y;h,W)\,dy.
\label{eq:ggmp_dens_obj_impl}
\end{equation}
By Lemma~\ref{distribution_MLE}, maximizing \eqref{eq:ggmp_dens_obj_impl} is equivalent (up to the constant $-\sum_n H(p_n)$)
to minimizing $\sum_n D_{\mathrm{KL}}\!\big(p_n \,\|\, q_n(\cdot;h,W)\big)$, 
and in the empirical case
$p_n = \hat{P}_n = \frac{1}{T}\sum_{t=1}^T \delta_{Y_{nt}}$,
the objective \eqref{eq:ggmp_dens_obj_impl} reduces to the sample average log-likelihood:
\begin{equation}
\int \log q_n(y;h,W)\,d\hat P_n(y)
= \frac{1}{T}\sum_{t=1}^T \log q_n(Y_{nt};h,W). 
\end{equation}
When $p_n$ is represented on a one-dimensional grid $\{y_{n\ell}\}_{\ell=1}^{M_n}$ with quadrature weights
$\{\Delta y_{n\ell}\}_{\ell=1}^{M_n}$ and normalization $\sum_\ell p_n(y_{n\ell})\Delta y_{n\ell}=1$,
we approximate \eqref{eq:ggmp_dens_obj_impl} as
\begin{equation}
\mathcal{L}_{\mathrm{dens}}(h,W)
\approx
\sum_{n=1}^{N}\sum_{\ell=1}^{M_n}
p_n(y_{n\ell})\,\Delta y_{n\ell}\,\log q_n(y_{n\ell};h,W).
\label{eq:ggmp_dens_obj_quad_impl}
\end{equation}
To compute $\log q_n$ stably, define
\begin{equation}
\mathcal{B}^{(n)}_{k}(y)
:= \log(w_{nk})
+ \log \mathcal{N}\!\Big(y \,\Big|\, \mu_{nk}(h_k),\,\nu_{nk}(h_k)+s^2_{nk}\Big),
\label{eq:B_k_density_impl}
\end{equation}
so that
\begin{align}
\log q_n(y;h,W)
&=\log\sum_{k=1}^{K}\exp\!\big(\mathcal{B}^{(n)}_{k}(y)\big)\nonumber\\
&=\mathcal{B}^{(n)}_{k_0}(y)+
\log\!\Bigg(\sum_{k=1}^{K}\exp\!\big(\mathcal{B}^{(n)}_{k}(y)-\mathcal{B}^{(n)}_{k_0}(y)\big)\Bigg),
\label{eq:ggmp_logsumexp_k_impl}
\end{align}
where $k_0\in\arg\max_k \mathcal{B}^{(n)}_{k}(y)$.\footnote{The approximation \eqref{eq:ggmp_dens_obj_quad_impl} is a standard quadrature estimate of \eqref{eq:ggmp_dens_obj_impl} and inherits the convergence rate of the chosen rule. For non-uniform grids $\{y_{n\ell}\}$, the weights $\Delta y_{n\ell}$ must be proper quadrature weights (e.g.\ trapezoidal widths $\Delta y_{n\ell}=\tfrac{1}{2}(y_{n,\ell+1}-y_{n,\ell-1})$), not a fixed constant. Under mild regularity---namely, if $f_n(y):=p_n(y)\log q_n(y;h,W)$ is twice continuously differentiable on a bounded interval $[a_n,b_n]$---the trapezoidal rule gives $\left|\int_{a_n}^{b_n} f_n(y)\,dy-\sum_{\ell=1}^{M_n} f_n(y_{n\ell})\,\Delta y_{n\ell}\right|\le \frac{b_n-a_n}{12}\max_y |f_n''(y)|(\Delta y_n^{\max})^2$, where $\Delta y_n^{\max}:=\max_\ell \Delta y_{n\ell}$, yielding total error $O(N(\Delta y^{\max})^2)$ over all $N$ inputs. Under the shared-weight parameterization introduced below, each term in \eqref{eq:ggmp_dens_obj_quad_impl} has the form $\alpha_{n\ell}\log\!\big(\sum_k w_k q_{nk}(y_{n\ell})\big)$ with $\alpha_{n\ell}:=p_n(y_{n\ell})\Delta y_{n\ell}\ge 0$, so the discretized objective remains concave in $w$ on $\Delta^{K-1}$ and the optimization guarantees below are unchanged. In all experiments (Section~\ref{sec:experiments}), the grids are fine enough that this error is negligible.}

The distributional log-likelihood \eqref{eq:ggmp_dens_obj_impl} provides a principled training objective,
but optimizing it requires specifying a concrete computational procedure.
The parameters to be determined are the GP hyperparameters $h = \{h_k\}_{k=1}^K$
together with a choice of mixture weights. In principle, one could optimize
\eqref{eq:ggmp_dens_obj_impl} jointly over $(h,W)$, as elaborated in
Section~\ref{sec:limitations}; in practice, the decoupled procedure
described below performs well and may even be preferable, as the
per-component GP fit is a standard marginal-likelihood optimization,
avoiding the high-dimensional non-convex joint problem. This decoupling
also allows the weight strategy to be chosen independently.

We proceed with a two-stage procedure using the shared-weight scheme ($w_{nk} = w_k$), and defer the input-dependent weighting method to Section~\ref{sec:input_dependent_weights}\footnote{The equal-weight scheme requires no optimization, since once the number of components $K$ is specified, each weight is fixed at $1/K$.}.
In the first stage, each component GP is trained independently:
the hyperparameters $h_k$ are determined by maximizing the standard GP log marginal likelihood on the aligned dataset $\mathcal{D}_k$,
\begin{equation}
 h_k^* = \arg\max_{h_k} \log p\,\big(\hat{\mathbf{m}}_k \mid \mathbf{x}, h_k, \{s^2_{nk}\}_{n=1}^N\big),
 \qquad k = 1, \ldots, K,
 \label{eq:gp_mll}
\end{equation}
where $\hat{\mathbf{m}}_k := (\hat{m}_{1k}, \ldots, \hat{m}_{Nk})^\top$ is the vector of aligned
component means and the heteroscedastic noise variances $\{s^2_{nk}\}$ enter the likelihood
covariance as $\mathbf{K}_k + \mathrm{diag}(s^2_{1k}, \ldots, s^2_{Nk})$,
with $\mathbf{K}_k$ denoting the kernel matrix under $\kappa_k$.
Each such optimization is a standard single-output GP model selection problem
and can be solved as usual.

In the second stage, holding $h = h^*$ fixed, we optimize the shared weight vector $w \in \Delta^{K-1}$ by maximizing the distributional log-likelihood
\begin{equation}
 w^* = \arg\max_{w \in \Delta^{K-1}} \mathcal{L}_{\mathrm{dens}}(h^*, w),
 \label{eq:weight_opt}
\end{equation}
where $\mathcal{L}_{\mathrm{dens}}$ is defined in \eqref{eq:ggmp_dens_obj_impl}
(or its quadrature approximation \eqref{eq:ggmp_dens_obj_quad_impl}).
Under the shared-weight parameterization, this is a concave maximization problem over $\Delta^{K-1}$, since $\mathcal{L}_{\mathrm{dens}}$ is a sum of $\log(\cdot)$ composed with
an affine function of $w$ at each quadrature/sample point,
and the log of a nonnegative affine function is concave in the mixture weights. The global optimum can therefore be found reliably by projected gradient ascent,
exponentiated gradient updates, or any method that solves for concave programs on the simplex.

This two-stage design is fully parallelizable across the $K$ components
and reduces the weight optimization to a low-dimensional concave problem
on the simplex. The cost is that $(h^*, w^*)$ is not guaranteed to be a
stationary point of the joint objective $\mathcal{L}_{\mathrm{dens}}(h, w)$ since each GP is trained under its own marginal likelihood rather than the
full mixture objective. This decoupling is reasonable when the
local GMM fitting and alignment step produces stable, non-crossing component
tracks: each $\mathcal{D}_k$ then resembles a well-posed single-output
regression problem whose marginal-likelihood hyperparameters capture the
smoothness and length scale that the mixture objective requires. 

The overall training complexity is $O(KN^3)$---the same scaling as fitting
$K$ independent GPs---dominated by the Cholesky factorization of the
$N \times N$ kernel-plus-noise matrix in each component. Local mixture
fitting adds $O(NI_{\mathrm{EM}}TK)$ and the simplex weight optimization
adds $O(I_w NMK)$, both modest relative to GP training. At test time,
prediction costs $O(KN^2)$ per new input. In the multivariate case
($p > 1$), a separate scalar GP is trained for each output dimension
within each component, giving $Kp$ models and training/prediction costs
of $O(KpN^3)$ and $O(KpN^2)$, respectively. A deeper analysis is provided in Appendix~\ref{app:multivariate}.

\subsubsection{Prediction at new inputs}
\label{sec:prediction}

Given the trained hyperparameters $h^*$ and optimized weights $w^*$,
the model yields a closed-form predictive density at any new input $x_*$.
For each component $k$, the GP posterior predictive at $x_*$ gives
$g_k(x_*) \mid (h_k^*, \mathcal{D}_k) \sim \mathcal{N}\!\big(\mu_{*k}(h_k^*),\, \nu_{*k}(h_k^*)\big)$,
where $\mu_{*k}$ and $\nu_{*k}$ are the standard GP predictive mean and variance
conditioned on the aligned training data $\mathcal{D}_k$.
Marginalizing latent component means as in \eqref{eq:q_nk_density_impl},
the $k$-th predictive density is
\begin{equation}
 q_{*k}(y; h_k^*) = \mathcal{N}\!\Big(y \,\Big|\, \mu_{*k}(h_k^*),\; \nu_{*k}(h_k^*) + \bar{s}^2_k\Big),
 \label{eq:pred_component}
\end{equation}
where $\bar{s}^2_k := \frac{1}{N}\sum_{n=1}^{N} s^2_{nk}$ is the mean of the
training-set within-component variances for component $k$.
Although training uses heteroscedastic noise variances $s^2_{nk}$ at each input,
at a new location no local mixture fit is available. When the within-component variance varies substantially across the input domain,
a more refined approach would interpolate $s^2_{nk}$ as a function of $x$
(e.g., via a second GP or a parametric model) to obtain an input-dependent
prediction noise variance $\hat{s}^2_{*k}(x_*)$.
We use the simpler average in this work and explain its empirical adequacy in Section~\ref{sec:experiments}.
The full predictive density is then
\begin{equation}
 q(y \mid x_*;\, h^*, w^*) = \sum_{k=1}^{K} w_k^*\, q_{*k}(y; h_k^*).
 \label{eq:pred_new_input_inf}
\end{equation}
Prediction requires only standard GP inference at $x_*$ for each component
followed by mixture evaluation, and is therefore $O(KN^2)$ per test input.

\section{Experiments}
\label{sec:experiments}
We evaluate the GGMP under a reproducible, compute-budget--matched baseline set
focused on methods that scale to our regimes and have well-maintained
implementations. We compare against (i) a standard heteroscedastic GP
($K=1$, equivalent to $\mathrm{GGMP}_1$) as a GP-family unimodal baseline,
and (ii) Mixture Density Networks (MDNs) with matched component counts
$K \in \{1,3,5,10,25\}$. MDNs parameterize mixture means, variances, and
weights as outputs of a single neural network, providing a strong and
scalable neural conditional density estimator with the same Gaussian-mixture
predictive form. All MDNs use common default architectures and
hyperparameters from the MDN literature \cite{bishop1994mixture,rothfuss2019conditional}
(details in Appendix~\ref{app:mdn_details}) and are trained by maximizing
the same distributional log-likelihood used for the GGMP weight optimization.

We do not include OMGP or GP-CDE baselines due to feasibility and
reproducibility constraints in our large-$T$ setting. OMGP scales at least cubically with the number of pointwise observations; on our hardware, OMGP$_3$ required $\sim$1 hour for 50{,}000 pointwise observations, making direct comparison impractical without substantial method modification. For GP-CDE, we did not find a publicly available implementation---reproducing the method would require an engineering effort beyond the scope of this work.

We evaluate all methods using three groups of metrics. The first
quantifies how closely the predicted density matches the true
conditional, using divergence measures between predicted and
ground-truth densities computed per response marginal, with additional
joint divergences in the multivariate-response setting. The second
assesses calibration through probability integral transform (PIT)
statistics and empirical coverage at nominal levels. The third measures
pointwise predictive accuracy via held-out log-likelihood and CRPS. For multivariate cases, metrics in the second and third groups are computed marginally for each response then averaged. Because our primary interest is conditional density estimation rather than point prediction, we weight the discussion of results accordingly.

Across all experiments we split inputs 80/20 into training and test sets. For univariate outputs we align components across fits by sorting on the fitted means; for multivariate outputs we use the Hungarian matching method, as described in Section~\ref{sec:fitting}.

In addition to these external comparisons, we ablate a GGMP's two main
structural simplifications. Section~\ref{sec:additive_manufacturing} tests plug-in variances in a low-$N$ regime---the Gaussian approximation underlying each mixture fit is least favorable. Section~\ref{sec:weight_effect} compares shared weights against equal weights and input-dependent weights. Together, both assess whether the restriction justified by Proposition~\ref{prop:universality} costs predictive accuracy in practice.

\subsection{Synthetic Function} 
\label{sec:synth_func}
The synthetic dataset is a distribution-valued regression benchmark on a non-Gaussian one-dimensional input domain $x\in[-3,3]$, with $N=300$ input locations and $T=2000$ samples per location.
The latent trend is
\[
f(x)=0.95\sin(1.05x-0.30)+0.55\sin(2.45x+0.80)-0.38\tanh(1.8x)+0.075x^3.
\]
Conditional densities are generated from a hierarchical mixture
\[
p(y\mid x)=\sum_{b=1}^{3} w_b(x)\sum_{j=1}^{4}\alpha_{b,j}(x)\,\mathcal N\!\big(y\mid \mu_{b,j}(x),\sigma^2_{b,j}(x)\big),
\]
where both $\mu_{b,j}(x)$ and $\sigma^2_{b,j}(x)$ vary smoothly with $x$, and $w_b(x)$ is
controlled by a smooth separation field so that some regions are effectively single-mode
while others are strongly multimodal with overlap and asymmetry. Samples are generated
via inverse-CDF sampling and converted into empirical conditional densities on a fixed
$y$-grid. Figure~\ref{fig:synthetic_function} visualizes this function.
\begin{figure}[t]
 \centering
 \includegraphics[width=\linewidth]{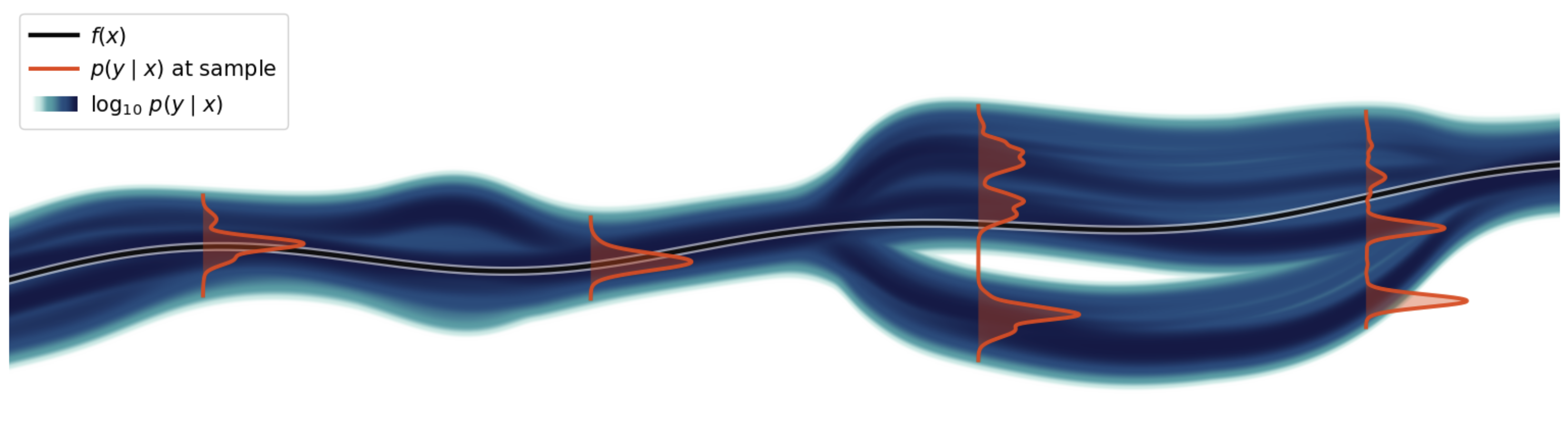}
 \caption{\emph{Synthetic distribution field with fixed-point slices.} The background shows $\log_{10} p(y\mid x)$, generated from a single latent mean function $f(x)$ (black curve), while red vertical slices depict sampled local conditional densities at selected $x$-locations. Despite sharing the same underlying $f(x)$, the local distributions vary strongly in modality, spread, and asymmetry.}
 \label{fig:synthetic_function}
\end{figure}

\begin{table}[h!]
\centering
\small
\begin{tabular}{l|cccc}
\hline
Model
& Bhattacharyya
& Symmetric KL
& Wasserstein-1
& $L^1$ \\
\hline
$\mathrm{GP}_{1}$
& 0.4869 $\pm$ 0.1513
& 4.2005 $\pm$ 1.7752
& 0.7589 $\pm$ 0.2126
& 1.3115 $\pm$ 0.1699 \\
\hline
$\mathrm{MDN}_{1}$
& 0.2278 $\pm$ 0.2124
& 2.9952 $\pm$ 3.0088
& 0.3223 $\pm$ 0.3369
& 0.7633 $\pm$ 0.4602 \\
$\mathrm{MDN}_{3}$
& 0.0169 $\pm$ 0.0049
& 0.1561 $\pm$ 0.0486
& 0.0429 $\pm$ 0.0357
& 0.1989 $\pm$ 0.0586 \\
$\mathrm{MDN}_{5}$
& 0.0163 $\pm$ 0.0048
& 0.1606 $\pm$ 0.0533
& 0.0366 $\pm$ 0.0309
& 0.1936 $\pm$ 0.0522 \\
$\mathrm{MDN}_{10}$
& 0.0156 $\pm$ 0.0038
& 0.1316 $\pm$ 0.0382
& 0.0470 $\pm$ 0.0415
& 0.1925 $\pm$ 0.0462 \\
$\mathrm{MDN}_{25}$
& 0.0179 $\pm$ 0.0028
& 0.1357 $\pm$ 0.0446
& \textbf{0.0296 $\pm$ 0.0244}
& \textbf{0.1839 $\pm$ 0.0282} \\
\hline
$\mathrm{GGMP}_{3}$
& 0.0722 $\pm$ 0.0334
& 0.3322 $\pm$ 0.1727
& 0.1710 $\pm$ 0.0848
& 0.4980 $\pm$ 0.0944 \\
$\mathrm{GGMP}_{5}$
& 0.0602 $\pm$ 0.0342
& 0.2856 $\pm$ 0.2089
& 0.1867 $\pm$ 0.1357
& 0.4374 $\pm$ 0.0966 \\
$\mathrm{GGMP}_{10}$
& 0.0267 $\pm$ 0.0200
& 0.1372 $\pm$ 0.1468
& 0.0752 $\pm$ 0.0449
& 0.2452 $\pm$ 0.0833 \\
$\mathrm{GGMP}_{25}$
& \textbf{0.0149 $\pm$ 0.0105}
& \textbf{0.0744 $\pm$ 0.0732}
& 0.0589 $\pm$ 0.0343
& 0.1875 $\pm$ 0.0591 \\
\hline
\end{tabular}
\caption{Divergence metrics for the synthetic dataset.}
\label{tab:1d_distributional_metrics_meanstd}
\end{table}

Table~\ref{tab:1d_distributional_metrics_meanstd} shows that both a GGMP and an MDN
improve with increasing $K$ and dramatically outperform their
$K=1$ baselines, confirming that unimodal models are inadequate for this
data-generating mechanism. MDNs achieve lower divergence scores at small $K$,
reflecting their ability to learn input-dependent weights and variances
end-to-end. The gap narrows with increasing $K$: at $K=25$, the GGMP leads on
Bhattacharyya and symmetric KL while MDN leads on Wasserstein-1 and $L^1$,
suggesting that GP-based interpolation better captures global density shape
whereas the MDN's flexible parameterization better resolves fine local
transport structure. Sample predictive fits for varying $K$ can be found in
Appendix~\ref{app:synth_pred_fits}.

\begin{table}[h!]
\centering
\small
\begin{tabular}{l|cc|ccccc}
\hline
Model
& Log Score
& CRPS
& PIT Mean
& PIT Std
& Cov 50\%
& Cov 90\%
& Cov 95\% \\
\hline
$\mathrm{GP}_{1}$
& -1.6003 & 1.4150 & 0.4774 & 0.2681 & 0.5648 & 0.8854 & 0.9330 \\
\hline
$\mathrm{MDN}_{1}$
& -0.9420 & 0.5585 & 0.4822 & 0.2779 & 0.5115 & 0.9143 & 0.9570 \\
$\mathrm{MDN}_{3}$
& -0.3572 & 0.4935 & 0.5004 & 0.2290 & 0.6710 & 0.9762 & 0.9890 \\
$\mathrm{MDN}_{5}$
& -0.3505 & 0.4945 & 0.4914 & 0.2286 & 0.6700 & 0.9767 & 0.9884 \\
$\mathrm{MDN}_{10}$
& -0.3458 & 0.4943 & 0.4988 & 0.2300 & 0.6679 & 0.9755 & 0.9886 \\
$\mathrm{MDN}_{25}$
& -0.3415 & 0.4934 & 0.5029 & 0.2289 & 0.6713 & 0.9766 & 0.9892 \\
\hline
$\mathrm{GGMP}_{3}$
& -0.6002 & 1.3912 & 0.5034 & 0.2586 & 0.5633 & 0.9501 & 0.9677 \\
$\mathrm{GGMP}_{5}$
& -0.5582 & 1.3158 & 0.5078 & 0.2645 & 0.5604 & 0.9486 & 0.9709 \\
$\mathrm{GGMP}_{10}$
& -0.4392 & 1.3228 & 0.5020 & 0.2771 & 0.5280 & 0.9249 & 0.9645 \\
$\mathrm{GGMP}_{25}$
& -0.3999 & 1.1950 & 0.5025 & 0.2810 & 0.5237 & 0.9144 & 0.9580 \\
\hline
\end{tabular}
\caption{Predictive scoring and calibration diagnostics on the synthetic dataset.}
\label{tab:synthetic_predictive_scores}
\end{table}

Table~\ref{tab:synthetic_predictive_scores} reveals a notable contrast in
calibration. The GGMP produces well-calibrated PIT statistics across all $K$ (PIT
means within $0.008$ of $0.5$; 90\% and 95\% coverage bracketing nominal
levels), confirming that plug-in variances are sufficiently precise in this
data-rich setting. MDNs with $K \geq 3$ achieve superior log scores and CRPS
but exhibit systematic overcoverage---90\% intervals capture
${\sim}97.6\%$ of observations---indicating overdispersed predictive
densities, a common consequence of learning regularity entirely from data
without an explicit smoothness prior.

\subsection{U.S. Temperature Extremes} 
The training data consist of a large corpus of U.S.\ surface-air temperature observations: roughly $50$ million individual measurements collected at approximately $7{,}000$ weather stations distributed across the United States over about $10$ years \cite{Menne2012}. We represent the data as a time-by-station array of temperatures, with station input $x_n=(\mathrm{lon}_n,\mathrm{lat}_n).$
For each station, we marginalize over time and treat the resulting values as samples from a station-specific marginal distribution. We construct $\hat p_n(y)$ using fixed-bin histograms: for station $n$, bin edges span that station's observed range, with bin width $\Delta y_n=\frac{y_{n,\max}-y_{n,\min}}{B},$ and normalization $\sum_j \hat p_n(y_{n,j})\,\Delta y_{n,j}=1.$ Thus, support is represented on a station-specific grid (histogram bin centers), not KDE bandwidth selection. 

The GGMP operates on per-station distributional summaries, so its cost scales with the number of stations regardless of how many observations underlie each summary.
MDNs train on individual $(x,y)$ pairs; we uniformly subsample $10$
million pairs from the dataset to keep MDN training and evaluation tractable. 

\begin{table}[h!]
\centering
\small
\begin{tabular}{l|cccc}
\hline
Model
& Bhattacharyya
& Symmetric KL
& Wasserstein-1
& $L^1$ \\
\hline
$\mathrm{GP}_{1}$
& 0.1580 $\pm$ 0.1050 & 2.6399 $\pm$ 1.6705 & 2.2896 $\pm$ 0.9970 & 0.6090 $\pm$ 0.1987 \\
\hline
$\mathrm{MDN}_{1}$ & 0.1480 $\pm$ 0.1033 & 2.6022 $\pm$ 1.0942 & 1.7232 $\pm$ 1.0110 & 0.6011 $\pm$ 0.1801\\
$\mathrm{MDN}_{3}$ & 0.1428 $\pm$ 0.1011 & 2.5923 $\pm$ 1.2212 & 1.2998 $\pm$ 1.0232 & 0.5801 $\pm$ 0.2336 \\
$\mathrm{MDN}_{5}$ & 0.1414 $\pm$ 0.1006 & 2.5834 $\pm$ 1.6252 & 1.2121 $\pm$ 0.9547 & 0.5244 $\pm$ 0.2482 \\
$\mathrm{MDN}_{10}$ & 0.1425 $\pm$ 0.0989 & 2.5901 $\pm$ 0.9998 & 1.0829 $\pm$ 1.1029 & 0.5328 $\pm$ 0.2434 \\
$\mathrm{MDN}_{25}$ & \textbf{0.1382 $\pm$ 0.1001} & 2.5602 $\pm$ 1.4353 & \textbf{1.0021 $\pm$ 0.9887} & 0.4932 $\pm$ 0.2223 \\
\hline
$\mathrm{GGMP}_{3}$
& 0.1419 $\pm$ 0.1052 & 2.5760 $\pm$ 1.6838 & 1.3671 $\pm$ 1.0594 & 0.5110 $\pm$ 0.2429 \\
$\mathrm{GGMP}_{5}$
& 0.1379 $\pm$ 0.1038 & 2.5597 $\pm$ 1.6844 & 1.1999 $\pm$ 1.0137 & 0.4928 $\pm$ 0.2463 \\
$\mathrm{GGMP}_{10}$
& 0.1376 $\pm$ 0.1032 & \textbf{2.5594 $\pm$ 1.6823} & 1.2085 $\pm$ 1.0914 & 0.4916 $\pm$ 0.2458 \\
$\mathrm{GGMP}_{25}$
& 0.1375 $\pm$ 0.1025 & 2.5603 $\pm$ 1.6797 & 1.1958 $\pm$ 1.0899 & \textbf{0.4909 $\pm$ 0.2438} \\
\hline
\end{tabular}
\caption{Divergence metrics for the temperature dataset.}
\label{tab:3d_distributional_metrics_meanstd}
\end{table}

Table~\ref{tab:3d_distributional_metrics_meanstd} indicates that increasing
$K$ improves distributional fit primarily from $K=1$ to moderate values, after
which gains largely saturate. The GGMP and MDN achieve comparable divergence
scores at matched $K$, with MDN$_{25}$ attaining the best Bhattacharyya
and Wasserstein-1, while GGMP$_{25}$ posts the best $L^1$
and GGMP$_{10}$ the best symmetric KL. The differences
between the two families are small relative to the standard deviations,
suggesting that with sufficient data both models recover similar distributional
structure; the choice between them is better distinguished by calibration and
uncertainty quantification. Sample fits are previewed in
Figure~\ref{fig:intro_predictive_fits}.

\begin{table}[h!]
\centering
\small
\begin{tabular}{l|cc|ccccc}
\hline
Model
& Log Score
& CRPS
& PIT Mean
& PIT Std
& Cov 50\%
& Cov 90\%
& Cov 95\% \\
\hline
$\mathrm{GP}_{1}$
& -3.7951 & 13.7364 & 0.5134 & 0.3050 & 0.4194 & 0.9065 & 0.9624 \\
\hline
$\mathrm{MDN}_{1}$
& -3.7571 & 7.3935 & 0.5201 & 0.4387 & 0.4223 & 0.5997 & 0.6600 \\
$\mathrm{MDN}_{3}$
& -3.6779 & 6.1454 & 0.5030 & 0.3138 & 0.4868 & 0.7547 & 0.8015 \\
$\mathrm{MDN}_{5}$
& -3.6770 & 6.0630 & 0.4993 & 0.3055 & 0.4889 & 0.8013 & 0.8553 \\
$\mathrm{MDN}_{10}$ & -3.6699 & 5.7824 & 0.5001 & 0.3043 & 0.4989 & 0.8892 & 0.9342 \\
$\mathrm{MDN}_{25}$ & -3.6822 & 5.2364 & 0.4817 & 0.3570 & 0.4518 & 0.8438 & 0.8911 \\
\hline
$\mathrm{GGMP}_{3}$
& -3.7118 & 7.2930 & 0.5040 & 0.2906 & 0.5009 & 0.8887 & 0.9382 \\
$\mathrm{GGMP}_{5}$
& -3.6916 & 6.6387 & 0.5013 & 0.2883 & 0.5028 & 0.8976 & 0.9453 \\
$\mathrm{GGMP}_{10}$
& -3.6892 & 5.8346 & 0.5015 & 0.2886 & 0.4990 & 0.9013 & 0.9484 \\
$\mathrm{GGMP}_{25}$
& -3.6895 & 4.7765 & 0.5018 & 0.2899 & 0.4972 & 0.8987 & 0.9485 \\
\hline
\end{tabular}
\caption{Predictive scoring and calibration diagnostics on the temperature extremes dataset.}
\label{tab:temp_extremes_predictive_scores}
\end{table}

Table~\ref{tab:temp_extremes_predictive_scores} separates the two models more
clearly. The GGMP maintains well-calibrated coverage across all $K$. MDNs are systematically undercovered, most severely at $K=1$ but still noticeably at $K=25$. This gap reflects the lack of principled epistemic uncertainty in the MDN: without a smoothness prior or posterior variance, the predictive intervals are too narrow. The GGMP also achieves the best CRPS at $K=25$, while MDN$_{10}$ posts the best log score, consistent with MDNs concentrating density sharply around modes at
the expense of tail coverage. The $K=1$ GP baseline again shows the largest
calibration departure among GP-based models,
confirming that, here, unmodeled multimodality, rather than the plug-in variance, is the dominant source of miscalibration. Across $K$, GGMP improves CRPS while preserving calibration, whereas MDNs raise log score by overconcentrating mass without epistemic uncertainty, producing overly narrow intervals and persistent undercoverage.

\subsection{Additive Manufacturing} 
\label{sec:additive_manufacturing}
We evaluate the GGMP on a proprietary additive-manufacturing dataset with repeated measurements collected at a finite set of process conditions, serving as a multivariate, multi-task example. To preserve confidentiality, we anonymize all feature and response names and report only their structure: each condition is represented by a 2D input vector $x_n\in\mathbb{R}^2$, and each replicate observation is a 2D output vector $y_{nt}\in\mathbb{R}^2$ (with response dimensions referred to as Axis 1 and Axis 2). The dataset contains $24$ conditions and $600,000$ raw observations. Our objective is distributional: for each condition $x_n$, we model the full conditional output distribution $p_n(y)$. 

\begin{figure}[b]
 \centering
 \includegraphics[width=\linewidth]{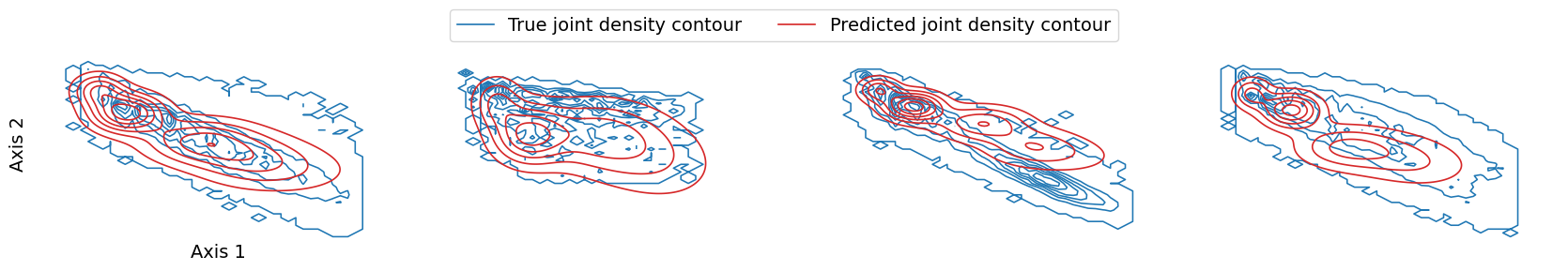}
 \caption{\emph{Joint density reconstruction on held-out conditions (multivariate, \(K=5\)).} Blue contours show the empirical test distribution in the 2D output space (Axis 1–Axis 2), and red contours show the GGMP$_5$ predictive distribution at the same inputs. Each panel corresponds to a different held-out condition. Overall alignment of contour shape and multimodal structure indicates that the GGMP captures the major geometry of the joint distribution.}
 \label{fig:contour}
\end{figure}

\begin{table}[h!]
\centering
\small
\begin{tabular}{l|cc}
\hline
Model
& Energy
& Sliced Wasserstein-1 \\
\hline
$\mathrm{GP}_{1}$ & 0.1764 $\pm$ 0.1182 & 0.2302 $\pm$ 0.0983 \\
\hline
$\mathrm{MDN}_{1}$ & 0.0871 $\pm$ 0.0535 & 0.1980 $\pm$ 0.0555 \\
$\mathrm{MDN}_{3}$ & 0.0820 $\pm$ 0.0739 & 0.1695 $\pm$ 0.0626 \\
$\mathrm{MDN}_{5}$ & 0.0712 $\pm$ 0.0442 & 0.1565 $\pm$ 0.0468 \\
$\mathrm{MDN}_{10}$ & 0.0652 $\pm$ 0.0539 & 0.1414 $\pm$ 0.0579 \\
$\mathrm{MDN}_{25}$ & 0.0652 $\pm$ 0.0586 & 0.1423 $\pm$ 0.0551 \\
\hline
$\mathrm{GGMP}_{3}$ & 0.0659 $\pm$ 0.0314 & 0.1352 $\pm$ 0.0334 \\
$\mathrm{GGMP}_{5}$ & 0.0462 $\pm$ 0.0390 & 0.1071 $\pm$ 0.0454 \\
$\mathrm{GGMP}_{10}$ & \textbf{0.0454 $\pm$ 0.0619} & \textbf{0.0922 $\pm$ 0.0782} \\
$\mathrm{GGMP}_{25}$ & 0.0574 $\pm$ 0.0541 & 0.1219 $\pm$ 0.0619 \\
\hline
\end{tabular}
\caption{Joint metric summary across held-out distributions (mean $\pm$ std).}
\label{tab:joint_metrics_meanstd}
\end{table}

\begin{table}[h!]
\centering
\small
\begin{tabular}{l|cccc}
\hline
Model
& Bhattacharyya
& Symmetric KL
& Wasserstein-1
& $L^1$ \\
\hline
$\mathrm{GP}_{1}$ & 0.1526 $\pm$ 0.0662 & 1.9729 $\pm$ 0.9653 & 0.3436 $\pm$ 0.1769 & 0.6563 $\pm$ 0.1645 \\
\hline
$\mathrm{MDN}_{1}$ & 0.1078 $\pm$ 0.0331 & 2.0924 $\pm$ 0.6608 & 0.1399 $\pm$ 0.1464 & 0.4713 $\pm$ 0.1012 \\
$\mathrm{MDN}_{3}$ & 0.0884 $\pm$ 0.0361 & 1.4679 $\pm$ 0.5218 & 0.1674 $\pm$ 0.1453 & 0.4608 $\pm$ 0.1318 \\
$\mathrm{MDN}_{5}$ & 0.0772 $\pm$ 0.0167 & 1.1918 $\pm$ 0.2770 & 0.1536 $\pm$ 0.1401 & 0.4595 $\pm$ 0.0683 \\
$\mathrm{MDN}_{10}$ & 0.0734 $\pm$ 0.0330 & 1.2620 $\pm$ 0.5892 & 0.1154 $\pm$ 0.1391 & 0.4205 $\pm$ 0.1010 \\
$\mathrm{MDN}_{25}$ & 0.0734 $\pm$ 0.0285 & 1.2057 $\pm$ 0.3635 & 0.1514 $\pm$ 0.1300 & 0.4152 $\pm$ 0.1306 \\
\hline
$\mathrm{GGMP}_{3}$ & 0.0832 $\pm$ 0.0128 & 0.9536 $\pm$ 0.3575 & 0.1907 $\pm$ 0.0612 & 0.4893 $\pm$ 0.0406 \\
$\mathrm{GGMP}_{5}$ & 0.0665 $\pm$ 0.0214 & 0.8474 $\pm$ 0.4050 & 0.1441 $\pm$ 0.0750 & 0.4382 $\pm$ 0.0597 \\
$\mathrm{GGMP}_{10}$ & \textbf{0.0573 $\pm$ 0.0321} & \textbf{0.6255 $\pm$ 0.3896} & \textbf{0.1157 $\pm$ 0.1127} & \textbf{0.4039 $\pm$ 0.1301} \\
$\mathrm{GGMP}_{25}$ & 0.0754 $\pm$ 0.0313 & 0.8691 $\pm$ 0.4403 & 0.1669 $\pm$ 0.0954 & 0.4699 $\pm$ 0.1289 \\
\hline
\end{tabular}
\caption{Marginal metric summary for Axis 1 (mean $\pm$ std).}
\label{tab:marginal_axis1_metrics_meanstd}
\end{table}

\begin{table}[h!]
\centering
\small
\begin{tabular}{l|cccc}
\hline
Model
& Bhattacharyya
& Symmetric KL
& Wasserstein-1
& $L^1$ \\
\hline
$\mathrm{GP}_{1}$ & 0.1751 $\pm$ 0.1213 & 2.0973 $\pm$ 1.5235 & 0.0954 $\pm$ 0.0527 & 0.7393 $\pm$ 0.3263 \\
\hline
$\mathrm{MDN}_{1}$ & 0.3139 $\pm$ 0.1338 & 5.2772 $\pm$ 1.7122 & 0.0579 $\pm$ 0.0537 & 0.9554 $\pm$ 0.2201 \\
$\mathrm{MDN}_{3}$ & 0.2382 $\pm$ 0.1520 & 3.9513 $\pm$ 2.2437 & 0.0682 $\pm$ 0.0623 & 0.8096 $\pm$ 0.2654 \\
$\mathrm{MDN}_{5}$ & 0.2192 $\pm$ 0.1645 & 3.5751 $\pm$ 2.4068 & 0.0708 $\pm$ 0.0437 & 0.7614 $\pm$ 0.2973 \\
$\mathrm{MDN}_{10}$ & 0.2217 $\pm$ 0.1461 & 3.5517 $\pm$ 2.2105 & 0.0881 $\pm$ 0.0366 & 0.7930 $\pm$ 0.2525 \\
$\mathrm{MDN}_{25}$ & 0.2063 $\pm$ 0.1501 & 3.4022 $\pm$ 2.3282 & 0.0695 $\pm$ 0.0488 & 0.7459 $\pm$ 0.2736 \\
\hline
$\mathrm{GGMP}_{3}$ & 0.1543 $\pm$ 0.1610 & 1.5859 $\pm$ 1.8617 & 0.0733 $\pm$ 0.0586 & 0.6687 $\pm$ 0.4414 \\
$\mathrm{GGMP}_{5}$ & 0.1215 $\pm$ 0.1060 & 1.2305 $\pm$ 1.1837 & \textbf{0.0672 $\pm$ 0.0509} & 0.6329 $\pm$ 0.3409 \\
$\mathrm{GGMP}_{10}$ & 0.1133 $\pm$ 0.0964 & \textbf{1.1007 $\pm$ 0.9603} & 0.0673 $\pm$ 0.0548 & \textbf{0.6101 $\pm$ 0.3253} \\
$\mathrm{GGMP}_{25}$ & \textbf{0.1119 $\pm$ 0.0909} & 1.1026 $\pm$ 0.9997 & 0.0694 $\pm$ 0.0552 & 0.6154 $\pm$ 0.3455 \\
\hline
\end{tabular}
\caption{Marginal metric summary for Axis 2 (mean $\pm$ std).}
\label{tab:marginal_axis2_metrics_meanstd}
\end{table}

Tables~\ref{tab:joint_metrics_meanstd}--\ref{tab:marginal_axis2_metrics_meanstd}
show that the GGMP consistently outperforms both the GP baseline and
MDNs across joint and marginal metrics. On joint metrics,
$\mathrm{GGMP}_{10}$ reduces energy distance by $74\%$ relative to
$\mathrm{GP}_1$ and by $30\%$ relative to the best MDN. On both
marginals, $\mathrm{GGMP}_{10}$ achieves the lowest or
near-lowest scores on all divergence measures, with symmetric
KL roughly half that of the best MDN on Axis~1. The gap widens on Axis~2. The MDN's relative weakness here is consistent with the low-$N$ regime: with only $N = 24$ training conditions the network has few input--output pairs from which to learn a smooth mapping, whereas a GGMP's kernel prior provides a stronger inductive bias when training inputs are scarce. Within the GGMP family, performance peaks at $K = 10$ and degrades at $K = 25$, consistent with the finite-sample bias--variance tradeoff discussed in Appendix~\ref{app:K_sensitivity}. Sample fits are shown in Figure~\ref{fig:contour}.

\begin{table}[h!]
\centering
\small
\begin{tabular}{l|cc|ccccc}
\hline
Model
& Log Score
& CRPS
& PIT Mean
& PIT Std
& Cov 50\%
& Cov 90\%
& Cov 95\% \\
\hline
$\mathrm{GP}_{1}$
& -6.6629 & 0.8140 & 0.6810 & 0.2852 & 0.4568 & 0.6856 & 0.7286 \\
\hline
$\mathrm{MDN}_{1}$
& -6.4721 & 0.2088 & 0.5088 & 0.2169 & 0.7114 & 0.9823 & 0.9936 \\
$\mathrm{MDN}_{3}$
& -6.1633 & 0.2071 & 0.5038 & 0.2434 & 0.6349 & 0.9720 & 0.9898 \\
$\mathrm{MDN}_{5}$
& -6.2886 & 0.2053 & 0.5102 & 0.2567 & 0.5860 & 0.9699 & 0.9885 \\
$\mathrm{MDN}_{10}$
& -5.5040 & 0.2024 & 0.4983 & 0.2566 & 0.5857 & 0.9681 & 0.9886 \\
$\mathrm{MDN}_{25}$
& -6.4337 & 0.2013 & 0.5055 & 0.2509 & 0.6040 & 0.9739 & 0.9884 \\
\hline
$\mathrm{GGMP}_{3}$
& -6.6525 & 0.7710 & 0.6918 & 0.2838 & 0.4109 & 0.6991 & 0.7414 \\
$\mathrm{GGMP}_{5}$
& -6.5454 & 0.7030 & 0.6590 & 0.3104 & 0.3728 & 0.7027 & 0.7478 \\
$\mathrm{GGMP}_{10}$
& -6.5591 & 0.4174 & 0.6171 & 0.3139 & 0.4219 & 0.7317 & 0.7781 \\
$\mathrm{GGMP}_{25}$
& -6.6197 & 0.6408 & 0.6070 & 0.3105 & 0.4381 & 0.7665 & 0.7989 \\
\hline
\end{tabular}
\caption{Predictive scoring and calibration diagnostics on the additive manufacturing dataset.}
\label{tab:additive_predictive_scores}
\end{table}

Table~\ref{tab:additive_predictive_scores} shows a clear calibration split. MDNs attain better CRPS and log scores, with PIT means near $0.5$, but they systematically overcover, indicating overly diffuse predictive variances, as in the synthetic study. GGMPs show the opposite pattern: PIT means and 90\% coverage are both below nominal. The issue is not missing GP posterior variance, which is included through $\nu_{*k}$, but the plug-in approximation: test-time within-component spread is replaced by the training average $\bar{s}_k^2$, and uncertainty from local mixture fitting, alignment, and input-dependent spread is not propagated. With only $N=24$ training conditions, this makes held-out predictive mixtures too sharp. Although $T\approx25{,}000$ gives reliable local mixtures at observed conditions, sparse input coverage forces held-out predictions to depend heavily on GP extrapolation. Richer mixtures improve coverage, but it remains below nominal.

\subsection{Effect of weight optimization}
\label{sec:weight_effect}

The shared-weight design adopted in
Section~\ref{sec:training} raises a natural question about how much
weight optimization contributes beyond equal weights $w_k = 1/K$,
and whether input-dependent weights $w(x)$ would further improve
performance. We investigate both questions here. We evaluate the distributional log-likelihood \eqref{eq:ggmp_dens_obj_impl} at equal weights and at the
optimized weights $w^*$ from \eqref{eq:weight_opt} for all three
datasets and all values of $K$. A negligible lift would indicate that alignment alone captures most of the model's expressiveness.

\begin{table}[h]
\centering
\small
\begin{tabular}{ll|rrr}
\hline
Dataset & GGMP$_K$ & $\mathcal{L}_{\mathrm{dens}}(w_{1/K})$
  & $\mathcal{L}_{\mathrm{dens}}(w^*)$
  & $\Delta_{w_{1/K} \to w^*}$ \\
\hline
Synthetic & 3 & $-$381.08 & $-$380.64 & 0.12\% \\
  & 5 & $-$345.32 & $-$345.18 & 0.04\% \\
  & 10 & $-$307.32 & $-$306.12 & 0.39\% \\
  & 25 & $-$301.29 & $-$299.62 & 0.55\% \\
\hline
Temperature & 3 & $-$22,379.2 & $-$22,378.1 & $<$0.01\% \\
  & 5 & $-$22,306.1 & $-$22,304.9 & $<$0.01\% \\
  & 10 & $-$22,322.5 & $-$22,315.0 & 0.03\% \\
  & 25 & $-$22,384.2 & $-$22,376.0 & 0.04\% \\
\hline
Additive & 3 & $-$1,937,212 & $-$1,881,962 & 2.85\% \\
  & 5 & $-$1,966,832 & $-$1,917,670 & 2.50\% \\
  & 10 & $-$1,917,485 & $-$1,878,141 & 2.05\% \\
  & 25 & $-$2,143,543 & $-$2,101,179 & 1.98\% \\
\hline
\end{tabular}
\caption{Effect of weight optimization on the distributional
log-likelihood. $\Delta_{w_{1/K} \to w^*}$ is the relative improvement
$(\mathcal{L}_{\mathrm{opt}} - \mathcal{L}_{\mathrm{equal}}) /
|\mathcal{L}_{\mathrm{equal}}| \times 100$.}
\label{tab:weight_sensitivity}
\end{table}

As shown in Table~\ref{tab:weight_sensitivity}, for the synthetic and temperature datasets, optimizing weights changes the objective by less than $0.6\%$ across all
values of $K$. For the additive-manufacturing dataset, the effect
is more significant, around $2$--$3\%$. This pattern is consistent with the universality guarantee of Proposition~\ref{prop:universality}: when $N$ is large, the component GPs learn accurate mean functions and can absorb components to compensate for uniform weighting, so additional degrees of freedom in $w$ provide little benefit. When $N$ is small, the GP posteriors are less flexible, so weight reallocation
corrects the mixture shape.

Importantly, the concavity of \eqref{eq:weight_opt} means that
shared weight optimization is both cheap and reliable---usually a fraction of the total training time (Appendix~\ref{app:runtimes})---and the global optimum is guaranteed. This makes it a low-risk default that captures
most of the benefit when equal weights are insufficient, without
introducing the additional parameters or potential overfitting of
a richer weight model. 

\subsubsection{Extending to input-dependent weights}
\label{sec:input_dependent_weights}
The shared-weight model constrains all inputs to share a single
simplex vector $w \in \Delta^{K-1}$. When mode prominence varies
across the inputs, $w(x) \in \Delta^{K-1}$ could improve fidelity. 

Since Table~\ref{tab:weight_sensitivity} shows that $\Delta_{w_{1/K} \to w^*}$
is negligible for the synthetic and temperature datasets, we
restrict this comparison to the additive-manufacturing one.

We parameterize input-dependent weights via a softmax-linear model,
\begin{equation}
\label{eq:xdep_weights}
w_k(x) \;=\; \frac{\exp(\beta_k^\top x + b_k)}
 {\sum_{j=1}^{K} \exp(\beta_j^\top x + b_j)},
\end{equation}
with one anchor class fixed at $\beta_K = 0$, $b_K = 0$ for
identifiability. Given the trained component GPs, we optimize
$\{\beta_k, b_k\}_{k=1}^{K-1}$ to maximize the distributional
log-likelihood \eqref{eq:ggmp_dens_obj_impl} with $w_k$ replaced
by $w_k(x_n)$, using L-BFGS initialized at the shared-weight
solution (i.e.\ $\beta_k = 0$, $b_k = \log w_k^*$).
Table~\ref{tab:weight_xdep} reports the results.

\begin{table}[h]
\centering
\small
\begin{tabular}{l|rrrr}
\hline
$K$ & $\mathcal{L}(w{=}1/K)$
 & $\mathcal{L}(w^*)$
 & $\mathcal{L}(w^*(x))$
 & $\Delta_{w^* \to w^*(x)}$ \\
\hline
 3 & $-$1,937,212 & $-$1,881,962 & $-$1,877,461 & 0.24\% \\
 5 & $-$1,966,832 & $-$1,917,670 & $-$1,902,757 & 0.78\% \\
10 & $-$1,917,485 & $-$1,878,141 & $-$1,868,909 & 0.49\% \\
25 & $-$2,143,543 & $-$2,101,179 & $-$2,088,154 & 0.62\% \\
\hline
\end{tabular}
\caption{Distributional log-likelihood on the
additive-manufacturing dataset under equal weights, optimized
shared weights, and optimized input-dependent weights
\eqref{eq:xdep_weights}. $\Delta_{w^* \to w^*(x)}$ is the
relative improvement from shared to input-dependent weights,
$(\mathcal{L}(w^*(x)) - \mathcal{L}(w^*)) /
|\mathcal{L}(w^*)| \times 100$.}
\label{tab:weight_xdep}
\end{table}

Combined with the results in Table~\ref{tab:weight_sensitivity},
this suggests a clear hierarchy: in data-rich regimes with clear
mode prominence, equal weights already suffice; when $N$ is small
or prominence varies across the input domain, weight optimization
is beneficial, but the transition from shared to input-dependent
weights occupies a narrow regime---$N$ must be small enough that
equal weights are inadequate yet large enough to estimate the
additional parameters in \eqref{eq:xdep_weights} reliably.
Because these regimes are difficult to identify a priori and the concave optimization in \eqref{eq:weight_opt} adds negligible
cost (see Appendix~\ref{app:runtimes} for runtimes), we retain shared optimized weights as the default throughout.

\section{Discussion and Conclusion}
\label{sec:discussion}

The experiments in Section~\ref{sec:experiments} illustrate three
regimes. On the synthetic benchmark, where there is ample data, all
distributional metrics improve up to $K = 25$,
confirming that approximation error dominates. On the temperature
dataset, multimodality is moderate and gains saturate beyond
$K \approx 3$--$5$. On the additive-manufacturing dataset,
performance peaks at $K = 10$ and degrades at $K = 25$, reflecting
the finite-sample bias--variance tradeoff analyzed in
Appendix~\ref{app:K_sensitivity}. Across all, GGMP$_{\geq3}$
substantially outperforms GP baselines, and matches or surpasses MDN models. Appendix~\ref{app:runtimes} summarizes computational costs---for comparable $K$, the GGMP is faster than MDN models.

\subsection{Selecting $K$ and handling redundancy}
\label{sec:selecting_K}

The number of mixture components $K$ is the principal structural
hyperparameter. In practice, it can be chosen by held-out
distributional validation or cross-validation
\citep{luo2025asymptotic,tibshirani2001estimating}: fit a GGMP over a grid of candidate values
of $K$, evaluate the distributional log-likelihood on a validation set, and select the maximizing value. Because the component GPs are independent and the weight
optimization is low-dimensional, models for different $K$ can be
fit in parallel, making the overhead of a grid search modest.
Alternatively, information-theoretic criteria applied to the local
mixture fits---such as BIC evaluated on each $\hat{P}_n$---can
provide a per-input mode count, and a robust
quantile across inputs can then be used as a global choice. We describe the failure modes in $K$-specification in Appendix~\ref{app:K_sensitivity}.

Increasing $K$ beyond the intrinsic number of modes does not force
redundant components to receive exactly zero weight. Instead, the
weight optimization typically distributes small weights across clusters of
nearly indistinguishable components, so the mixture behaves as
though it had fewer effective components even without strict
sparsity. This makes moderate over-specification relatively benign in practice,
although it can still reduce the effective sample size per component and thereby
degrade estimation in finite samples, as discussed above. If explicit sparsity is
desired, the weight objective \eqref{eq:ggmp_dens_obj_impl} can be augmented with a
sparsity-inducing regularizer---for example, a negative-entropy
penalty
$\mathcal{L}_{\mathrm{reg}}(w)=\mathcal{L}_{\mathrm{dens}}(h^*, w)+\lambda\sum_{k=1}^{K} w_k\log w_k$,
or a sparse Dirichlet prior $w\sim\mathrm{Dir}(\alpha\mathbf{1}_K)$
with $\alpha<1$.

\subsection{Limitations and future directions}
\label{sec:limitations}
Component alignment relies on sorting (scalar outputs) or
sequential Hungarian matching (multivariate outputs), both of
which are deterministic and greedy. When component tracks cross
frequently or the input domain lacks a natural sequential
structure, these heuristics can introduce label-switching artifacts
in the GP training targets. Global optimal transport across all
inputs simultaneously, or soft probabilistic alignment that
marginalizes over permutations, could mitigate this at additional
cost.

The plug-in approach conditions on locally estimated component means and within-component variances rather than propagating their uncertainty through the predictive density. GP posterior uncertainty in the aligned component means is retained through $\nu_{*k}$, but uncertainty from local mixture fitting, alignment, and test-time within-component variance is omitted. When the number of training inputs $N$ is small, this can lead to overconfident predictions (Section~\ref{sec:additive_manufacturing}). In the data-rich synthetic and temperature settings these effects
are negligible. A hierarchical extension that marginalizes over
posterior uncertainty---rather than conditioning on plug-in means
and variances---would widen predictive intervals where data are
sparse without sacrificing sharpness elsewhere.

Combining the GGMP with scalable GP methods such as inducing-point approximations or structured kernel interpolation would enable application to large-$N$ regimes where $O(KN^3)$ training is prohibitive. The same scalable infrastructure would also permit extension to massive pointwise datasets in settings where each input is associated with a large number of repeated observations $T$, which can be compressed into distribution-valued summaries before GP training.

Several additional extensions would broaden the framework's scope. Classification settings with discrete responses could be addressed by replacing the Gaussian mixture predictive form with categorical or softmax-based alternatives. Spatiotemporal applications would require kernels that model dependence across both space and time. Online updating would enable streaming data settings where retraining from scratch is impractical. Finally, tempered likelihoods \citep{li2026robust} could improve robustness to model misspecification or outliers.

\subsection{Summary}

The GGMP provides a tractable and modular framework for non-Gaussian
process regression that preserves closed-form Gaussian conditioning
within every component while producing  multimodal predictive
densities through mixture combination. Its reliance on standard GP
solvers, parallelizable training, and a principled distributional
objective make it a practical drop-in extension for settings where
conventional GP assumptions are insufficient. The limitations
identified above are addressable, and we expect that resolving them
will further extend the range of problems for which mixture-based
GP modeling is competitive with more complex approximate-inference
alternatives.

\paragraph{Code Availability}
The code is available at \url{https://github.com/vt2211/GGMP}.

\paragraph{Acknowledgments}
This work was supported by The Center for Advanced Mathematics for Energy Research Applications (CAMERA), which is jointly funded by the Advanced Scientific Computing Research (ASCR) and Basic Energy Sciences (BES) within the Department of Energy's Office of Science, under Contract No. DE-AC02-05CH11231. Additional support was provided by the U.S. Department of Energy, Office of Science, Office of Advanced Scientific Computing Research's Applied Mathematics Competitive Portfolios program under the same contract. Early conception and ideation were supported by the Laboratory Directed Research and Development Program of Lawrence Berkeley National Laboratory under U.S. Department of Energy Contract No. DE-AC02-05CH11231.

\paragraph{Author Contributions.}
V.T.: Ideation; Derivation; Implementation; Manuscript writing; Manuscript proofreading; Data curation; Experimentation. 
M.R.: Ideation; Derivation; Manuscript proofreading. 
H.L.: Ideation; Manuscript proofreading.
M.N.: Ideation; Derivation; Implementation; Manuscript writing; Manuscript proofreading; Data curation; Project oversight. 

\newpage

\bibliographystyle{plainnat}
\bibliography{literature}

\newpage

\appendix

\section{Gaussian Process Regression: Background}
\label{app:gp_background}

The GP regression framework models observations as
\begin{equation}
 y(x_i) = f(x_i) + \epsilon(x_i),
 \qquad \forall\, i \in \{1,2,\ldots,N\},
 \label{eq:gp_obs_model}
\end{equation}
where $f(x)$ is an unknown latent function and $\epsilon(x)$ 
is additive Gaussian noise. We write $N := |\mathcal{D}|$ for 
the number of training inputs throughout. Collecting the 
latent function values at the training inputs into 
$\mathbf{f} = [f(x_1), \ldots, f(x_N)]^\top$, the GP prior is 
specified as a multivariate normal distribution,
\begin{equation}
 p(\mathbf{f}) =
 \mathcal{N}(\mathbf{f} \mid \mathbf{m}, \mathbf{K}),
 \label{eq:priorGP}
\end{equation}
where $\mathbf{m} = [m(x_1),\ldots,m(x_N)]^\top$ is the vector 
of prior mean function evaluations and $\mathbf{K}$ is the 
covariance matrix induced by a kernel function $k$, with 
entries $\mathbf{K}_{ij} = k(x_i, x_j; h)$ for hyperparameters 
$h$. Assuming additive Gaussian noise, the likelihood is
\begin{equation}
 p(\mathbf{y}\mid\mathbf{f}) =
 \mathcal{N}(\mathbf{y} \mid \mathbf{f}, \mathbf{V}),
 \label{eq:likelihoodGP}
\end{equation}
where $\mathbf{V}$ is the noise covariance matrix, which may be 
non-i.i.d. The combination of Gaussian prior and likelihood 
permits closed-form marginalization over $\mathbf{f}$, yielding 
the log marginal likelihood
\begin{align}
 \log p(\mathbf{y} \mid h)
 &= \int_{\mathbb{R}^{N}}
 p(\mathbf{y}\mid\mathbf{f})\,p(\mathbf{f})\,\mathrm{d}\mathbf{f}
 \nonumber \\
 &= -\tfrac{1}{2}\log\lvert(2\pi)(\mathbf{K}
 {+}\mathbf{V})\rvert
 - \tfrac{1}{2}
 (\mathbf{y}-\mathbf{m})^{T}
 (\mathbf{K}+\mathbf{V})^{-1}
 (\mathbf{y}-\mathbf{m}),
 \label{eq:likelihood}
\end{align}
which is maximized over $h$ to train the model. For notational 
convenience we absorb $\mathbf{V}$ into $\mathbf{K}$ 
throughout, i.e., $\mathbf{K} \leftarrow \mathbf{K} + 
\mathbf{V}$. After training, the posterior predictive at a 
new input $x^*$ is
\begin{align}
 p(f^*\mid\mathbf{y}) =
 \mathcal{N}\!\Big(
 &\,m(x^*) + \mathbf{k}(x^*)^{T}\mathbf{K}^{-1}
 (\mathbf{y}-\mathbf{m}), 
 \nonumber \\
 &\,k(x^*,x^*) - \mathbf{k}(x^*)^{T}\mathbf{K}^{-1}
 \mathbf{k}(x^*)
 \Big),
 \label{eq:posterior}
\end{align}
where $\mathbf{k}(x^*) := [k(x_1,x^*),\ldots,k(x_N,x^*)]^\top$ 
is the vector of covariances between the training inputs and 
$x^*$. It is the closed-form conditioning in 
\eqref{eq:posterior} that the GGMP retains within every mixture 
component.

\section{Proofs and Theoretical Results}
\label{app:derivations}

\subsection{Proof of Proposition~\ref{prop:universality}}
\label{app:universality}
\begin{proof}

\textbf{Convention.} Throughout we use
$D_{\mathrm{TV}}(p,q)=\frac{1}{2}\int|p(y)-q(y)|\,dy$.

Fix $\epsilon>0$.

\medskip
\noindent\textbf{Step 1 (Uniform continuity in $x$).}
By the uniform tail assumption, choose $R>0$ such that
\[
\sup_{x\in\mathcal{X}}\int_{|y|>R}p^*(y\mid x)\,dy
<\frac{\epsilon}{6}.
\]
On the compact set $\mathcal{X}\times[-R,R]$, joint continuity
implies uniform continuity, so there exists $\eta>0$ such that
$\|x-x'\|<\eta$ implies
$|p^*(y\mid x)-p^*(y\mid x')|<\epsilon/(6R)$ for all
$y\in[-R,R]$. Then for $\|x-x'\|<\eta$,
\begin{align*}
\int|p^*(y\mid x)-p^*(y\mid x')|\,dy
&\leq \int_{-R}^{R}\frac{\epsilon}{6R}\,dy
+\int_{|y|>R}p^*(y\mid x)\,dy
+\int_{|y|>R}p^*(y\mid x')\,dy\\
&<\frac{\epsilon}{3}+\frac{\epsilon}{6}+\frac{\epsilon}{6}
=\frac{2\epsilon}{3}.
\end{align*}
Therefore
$D_{\mathrm{TV}}(p^*(\cdot\mid x),\,p^*(\cdot\mid x'))<\epsilon/3$.
Set $\rho=\eta$ and extract a finite $\rho$-cover
$\{x_1^*,\ldots,x_J^*\}$ of $\mathcal{X}$.

\noindent\textbf{Step 2 (Gaussian mixture approximation at each
center).}

\emph{Convolution approximation.}
For any $L^1$ density $f$, the convolution
$f*\mathcal{N}(0,\sigma^2)$ converges to $f$ in $L^1$ as
$\sigma^2\to 0$ (the standard approximate-identity property of
the Gaussian kernel). Since there are finitely many centers,
choose $\sigma^2>0$ such that
\[
\max_{1\leq j\leq J}\,
D_{\mathrm{TV}}\!\big(p^*(\cdot\mid x_j^*),\;
p^*(\cdot\mid x_j^*)*\mathcal{N}(0,\sigma^2)\big)
<\frac{\epsilon}{6}.
\]
Note that
$p^*(\cdot\mid x_j^*)*\mathcal{N}(0,\sigma^2)
=\int p^*(t\mid x_j^*)\,\mathcal{N}(\cdot\mid t,\sigma^2)\,dt$,
i.e., a Gaussian location mixture with mixing measure
$p^*(t\mid x_j^*)\,dt$.

\emph{Quantile discretization.}
For each center $x_j^*$, let $F_{x_j^*}$ denote the CDF of
$p^*(\cdot\mid x_j^*)$. Define the quantile points
$t_n^{(j)}=F_{x_j^*}^{-1}\!\big((n-\tfrac{1}{2})/N\big)$
for $n=1,\ldots,N$, and the discrete mixing measure
$\nu_j^N=\frac{1}{N}\sum_{n=1}^{N}\delta_{t_n^{(j)}}$.
As $N\to\infty$, $\nu_j^N$ converges weakly to
$p^*(\cdot\mid x_j^*)\,dt$.

\emph{Weak convergence to $L^1$ via Scheff\'e.}
Define the mixture densities
\[
f_j^N(y)
=\int\mathcal{N}(y\mid t,\sigma^2)\,d\nu_j^N(t)
=\frac{1}{N}\sum_{n=1}^{N}\mathcal{N}(y\mid t_n^{(j)},\sigma^2)
\]
and
$f_j(y)=\int\mathcal{N}(y\mid t,\sigma^2)\,p^*(t\mid x_j^*)\,dt$.
Since $t\mapsto\mathcal{N}(y\mid t,\sigma^2)$ is bounded and
continuous for each fixed $y$, weak convergence
$\nu_j^N\Rightarrow p^*(\cdot\mid x_j^*)\,dt$ gives
$f_j^N(y)\to f_j(y)$ for every $y$. Both $f_j^N$ and $f_j$ are
nonnegative and integrate to~1, so by Scheff\'e's lemma,
$\int|f_j^N(y)-f_j(y)|\,dy\to 0$.

Choose $N_0$ large enough that for all $j=1,\ldots,J$,
\[
D_{\mathrm{TV}}\!\left(
p^*(\cdot\mid x_j^*)*\mathcal{N}(0,\sigma^2),\;
\frac{1}{N_0}\sum_{n=1}^{N_0}
\mathcal{N}(\cdot\mid t_n^{(j)},\sigma^2)\right)
<\frac{\epsilon}{6}.
\]
By the triangle inequality, defining
$q_j(\cdot)=\frac{1}{N_0}\sum_{n=1}^{N_0}
\mathcal{N}(\cdot\mid t_n^{(j)},\sigma^2)$,
\[
D_{\mathrm{TV}}\!\big(p^*(\cdot\mid x_j^*),\,q_j\big)
<\frac{\epsilon}{6}+\frac{\epsilon}{6}
=\frac{\epsilon}{3}.
\]

\noindent\textbf{Step 3 (Partition of unity interpolation).}
Let $\{\phi_j\}_{j=1}^J$ be a continuous partition of unity on
$\mathcal{X}$ subordinate to the open cover
$\{B(x_j^*,\rho)\}_{j=1}^J$. Define
\[
\tilde{q}(\cdot\mid x)
=\sum_{j=1}^{J}\phi_j(x)\,q_j(\cdot).
\]
By joint convexity of total variation distance, for any
$x\in\mathcal{X}$,
\[
D_{\mathrm{TV}}\!\big(p^*(\cdot\mid x),\,
\tilde{q}(\cdot\mid x)\big)
\leq\sum_{j}\phi_j(x)\,
D_{\mathrm{TV}}\!\big(p^*(\cdot\mid x),\,q_j\big).
\]
For each $j$ with $\phi_j(x)>0$, we have
$\|x-x_j^*\|<\rho$, so by Steps~1 and~2,
\[
D_{\mathrm{TV}}\!\big(p^*(\cdot\mid x),\,q_j\big)
\leq D_{\mathrm{TV}}\!\big(p^*(\cdot\mid x),\,
p^*(\cdot\mid x_j^*)\big)
+D_{\mathrm{TV}}\!\big(p^*(\cdot\mid x_j^*),\,q_j\big)
<\frac{\epsilon}{3}+\frac{\epsilon}{3}
=\frac{2\epsilon}{3}.
\]
Therefore
$\sup_{x\in\mathcal{X}}
D_{\mathrm{TV}}(p^*(\cdot\mid x),\,
\tilde{q}(\cdot\mid x))<2\epsilon/3$.

Expanding, $\tilde{q}(\cdot\mid x)$ is a Gaussian location
mixture with shared variance $\sigma^2$ and mixing measure
\[
\nu_x
=\sum_{j=1}^{J}\sum_{n=1}^{N_0}
\frac{\phi_j(x)}{N_0}\,\delta_{t_n^{(j)}}
\]
supported on the finite set
$S=\{t_n^{(j)}:j=1,\ldots,J,\;n=1,\ldots,N_0\}$.

\noindent\textbf{Step 4 (Conversion to uniform weights).}
Order the distinct elements of $S$ as $s_1<\cdots<s_P$. The
mixing measure $\nu_x$ assigns continuous weight
$\alpha_p(x)\geq 0$ to $s_p$, with
$\sum_{p=1}^P\alpha_p(x)=1$. Define the cumulative weights
$C_0(x)=0$ and
$C_i(x)=\sum_{p=1}^{i}\alpha_p(x)$ for $i=1,\ldots,P$.

For $m=1,\ldots,M$, define
\[
\mu_m(x)
=\sum_{i=1}^{P}s_i\,w_i^{(m)}(x),
\qquad
w_i^{(m)}(x)
=M\!\left(
\min\!\big(C_i(x),\,m/M\big)
-\max\!\big(C_{i-1}(x),\,(m{-}1)/M\big)
\right)^{\!+}\!.
\]

\emph{Continuity.}
Each $w_i^{(m)}$ is a composition of continuous functions
($C_i$, $\min$, $\max$, $(\cdot)^+$), hence continuous. So each
$\mu_m:\mathcal{X}\to[s_1,s_P]$ is continuous.

\emph{Weights sum to~1.}
For fixed $m$ and $x$, the quantity $w_i^{(m)}(x)/M$ equals the
length of the intersection of the interval
$(C_{i-1}(x),C_i(x)]$ with $[(m{-}1)/M,\,m/M]$. Summing over
$i$ gives total length $1/M$, so
$\sum_i w_i^{(m)}(x)=1$.

\emph{Marginal recovery.}
$\frac{1}{M}\sum_{m=1}^{M}w_i^{(m)}(x)
=\mathrm{Leb}\!\big((C_{i-1}(x),C_i(x)]\big)
=\alpha_i(x)$.

\emph{Wasserstein bound.}
Define the coupling
$\gamma_x
=\sum_{m,i}\frac{w_i^{(m)}(x)}{M}\,
\delta_{(s_i,\,\mu_m(x))}$
between $\nu_x$ and
$\hat{\nu}_x=\frac{1}{M}\sum_{m=1}^{M}\delta_{\mu_m(x)}$.
Call component $m$ ``good'' if its window
$[(m{-}1)/M,\,m/M]$ lies entirely within some interval
$(C_{i-1}(x),C_i(x)]$; then $w_i^{(m)}(x)=1$ and
$\mu_m(x)=s_i$, contributing zero transport cost. Each
interior boundary $C_i(x)$ for $i=1,\ldots,P-1$ is straddled
by at most one window, so there are at most $P-1$ ``bad''
components, each contributing transport cost at most
$\mathrm{diam}(S)/M$. Hence
\[
W_1(\nu_x,\,\hat{\nu}_x)
\leq\frac{(P-1)\,\mathrm{diam}(S)}{M}.
\]

\emph{TV bound via Gaussian Lipschitz property.}
We use the following standard estimate. For univariate Gaussians
with equal variance,
\[
D_{\mathrm{TV}}\!\big(\mathcal{N}(t_1,\sigma^2),\,
\mathcal{N}(t_2,\sigma^2)\big)
=2\Phi\!\!\left(\frac{|t_1-t_2|}{2\sigma}\right)-1
\leq\frac{|t_1-t_2|}{\sigma\sqrt{2\pi}},
\]
where the inequality follows from
$\Phi(a)-\frac{1}{2}\leq\frac{a}{\sqrt{2\pi}}$ for $a\geq 0$.
In other words, the map
$t\mapsto\mathcal{N}(\cdot\mid t,\sigma^2)$ is Lipschitz from
$(\mathbb{R},|\cdot|)$ to
$(L^1(\mathbb{R}),\,D_{\mathrm{TV}})$ with Lipschitz constant
$L=\frac{1}{\sigma\sqrt{2\pi}}$.

Now, for any coupling $\pi$ of $\nu_x$ and $\hat{\nu}_x$,
\begin{align*}
D_{\mathrm{TV}}\!\left(
\int\mathcal{N}(\cdot\mid t,\sigma^2)\,d\nu_x(t),\;
\int\mathcal{N}(\cdot\mid s,\sigma^2)\,d\hat{\nu}_x(s)
\right)
&\leq\int D_{\mathrm{TV}}\!\big(
\mathcal{N}(t,\sigma^2),\,
\mathcal{N}(s,\sigma^2)\big)\,d\pi(t,s)\\
&\leq\frac{1}{\sigma\sqrt{2\pi}}
\int|t-s|\,d\pi(t,s).
\end{align*}
Taking the infimum over all couplings
(Kantorovich--Rubinstein duality),
\[
D_{\mathrm{TV}}\!\big(\tilde{q}(\cdot\mid x),\,
\hat{q}(\cdot\mid x)\big)
\leq\frac{W_1(\nu_x,\,\hat{\nu}_x)}{\sigma\sqrt{2\pi}}
\leq\frac{(P-1)\,\mathrm{diam}(S)}{M\,\sigma\sqrt{2\pi}}.
\]
Choose $M$ large enough that
\[
\frac{(P-1)\,\mathrm{diam}(S)}{M\,\sigma\sqrt{2\pi}}
<\frac{\epsilon}{3}.
\]

\noindent\textbf{Step 5 (Final bound).}
Set $K=M$. The functions
$\mu_1,\ldots,\mu_K:\mathcal{X}\to\mathbb{R}$ are continuous,
the variance $\sigma^2$ is shared, and the weights are uniform
($1/K$ each). For all $x\in\mathcal{X}$,
\[
D_{\mathrm{TV}}\!\big(p^*(\cdot\mid x),\,
\hat{q}(\cdot\mid x)\big)
\leq
\underbrace{D_{\mathrm{TV}}\!\big(p^*(\cdot\mid x),\,
\tilde{q}(\cdot\mid x)\big)}_{<\,2\epsilon/3}
+\underbrace{D_{\mathrm{TV}}\!\big(\tilde{q}(\cdot\mid x),\,
\hat{q}(\cdot\mid x)\big)}_{<\,\epsilon/3}
<\epsilon.
\qedhere
\]
\end{proof}

\subsection{Proof of Lemma~\ref{distribution_MLE}}
\label{app:proof_prop1}

Fix $n\in\{1,\dots,N\}$. By definition of the KL divergence,
\[
D_{\mathrm{KL}}\!\big(p_n \,\|\, q_\theta(\cdot\mid x_n)\big)
:=\int_{\mathcal Y} p_n(y)\log\frac{p_n(y)}{q_\theta(y\mid x_n)}\,d\mu(y).
\]
Expanding the log-ratio and splitting the integral gives
\[
D_{\mathrm{KL}}\!\big(p_n \,\|\, q_\theta(\cdot\mid x_n)\big)
=\int_{\mathcal Y} p_n(y)\log p_n(y)\,d\mu(y)
-\int_{\mathcal Y} p_n(y)\log q_\theta(y\mid x_n)\,d\mu(y).
\]
Rearranging and using $\int p_n\log p_n\,d\mu=-H(p_n)$,
\[
\int_{\mathcal Y} p_n(y)\log q_\theta(y\mid x_n)\,d\mu(y)
=
- H(p_n)
-
D_{\mathrm{KL}}\!\big(p_n \,\|\, q_\theta(\cdot\mid x_n)\big).
\]
Summing over $n=1,\dots,N$ yields
\[
\mathcal L(\theta)
=
-\sum_{n=1}^N H(p_n)
-\sum_{n=1}^N D_{\mathrm{KL}}\!\big(p_n \,\|\, q_\theta(\cdot\mid x_n)\big).
\]
Since $C:=-\sum_{n=1}^N H(p_n)$ is constant in $\theta$,
\[
\arg\max_\theta \mathcal L(\theta)
=
\arg\min_\theta \sum_{n=1}^N D_{\mathrm{KL}}\!\big(p_n \,\|\, q_\theta(\cdot\mid x_n)\big).
\]

\section{Model Extensions and Implementation Details}
\subsection{Sensitivity to the number of components, $K$}
\label{app:K_sensitivity}

\textbf{Under-specification ($K < K_{\mathrm{true}}$)}.
Suppose the true conditional density at input $x_n$ has
$K_{\mathrm{true}}$ well-separated modes but we fit only
$K < K_{\mathrm{true}}$ components. When a single fitted component
absorbs samples from two distinct modes with means $\mu_a$ and
$\mu_b$ and respective mixing fractions $\alpha$ and
$(1-\alpha)$, the fitted component mean converges to
$\hat{m}_{n\kappa} \to \alpha\mu_a + (1-\alpha)\mu_b$, which lies
between the two modes. By the law of total variance, the fitted
within-component variance satisfies $s^2_{n\kappa} \;\ge\; \alpha(1-\alpha)(\mu_a - \mu_b)^2$, growing quadratically in the mode separation. The GP trained on
these targets via \eqref{eq:gp_mll} therefore learns an
inter-modal mean function, and the inflated variance
$s^2_{n\kappa}$ enters the predictive density
\eqref{eq:q_nk_density_impl} as additional spread, producing a
component that places mass between the true modes rather than on
them. In the limiting case $K=1$, the single component reduces to
the marginal mean and variance of $p_n$, and the GGMP degenerates
to a standard heteroscedastic GP with no  multimodal structure.

\textbf{Over-specification ($K > K_{\mathrm{true}}$).}
Suppose the true conditional has $K_{\mathrm{true}}$ modes but we
fit $K > K_{\mathrm{true}}$ components. For each true mode, the
local mixture fit assigns multiple components that converge to
nearly identical parameters. If $J \ge 2$ fitted components share
the same mean $\mu$ and variance $\sigma^2$, their contribution
to the predictive mixture \eqref{eq:ggmp_station_density_k_impl}
satisfies $ \sum_{j=1}^{J} w_j\,\mathcal{N}(y \mid \mu, \sigma^2) \;=\; (\sum_{j=1}^{J} w_j)\,\mathcal{N}(y \mid \mu, \sigma^2)$,
which is a single Gaussian regardless of how the weights are
distributed among the $J$ components. The mixture is therefore no
more expressive than a $K_{\mathrm{true}}$-component model.
Moreover, the distributional log-likelihood
\eqref{eq:ggmp_dens_obj_impl} depends on $w_j$ and $w_\ell$ only
through their sum $w_j + w_\ell$ whenever $q_{nj} = q_{n\ell}$
for all $n$, so the objective is constant along the direction
$(w_j, w_\ell) \mapsto (w_j + \epsilon,\, w_\ell - \epsilon)$.
The weight optimization \eqref{eq:weight_opt} therefore becomes
ill-conditioned, with the Hessian of
$\mathcal{L}_{\mathrm{dens}}$ acquiring near-zero eigenvalues
along such redistribution directions, though the problem remains
concave---any solution along the flat direction yields the same
predictive density. 

This, however, is true when we have enough data to ensure the convergence of each component. In practice, performance reflects a bias--variance tradeoff governed by the
\emph{effective sample size per component}, $N_k^{\mathrm{eff}}=\sum_{n,t} r_{ntk}$, where $r_{ntk}$ denotes the EM responsibility of component $k$ for observation $t$ at input $x_n$. As $K$ grows, many
$N_k^{\mathrm{eff}}$ decrease, making local component estimation, alignment,
and GP fitting noisier. Consequently, very large $K$ can underperform despite higher expressivity. 

This is the regime observed in our experiments: within an intermediate $K$-range, increasing $K$ gives the best held-out distributional accuracy. We demonstrate both failure modes empirically in Section~\ref{sec:experiments}, visualize such distributions in Appendix~\ref{app:synth_pred_fits}, and discuss practical strategies for selecting $K$ in Section~\ref{sec:discussion}.

\subsection{Extension of Section~\ref{sec:fitting} to multivariate outputs}
\label{app:multivariate}

The derivation above assumes scalar outputs $y \in \mathbb{R}$. We now state the extension to $p$-dimensional outputs $y \in \mathbb{R}^p$. At each input $x_n$, the local Gaussian mixture fit (Section~\ref{sec:fitting}) now produces $p$-dimensional component parameters
$$\big\{(\hat{\boldsymbol{m}}_{n\kappa},\, \mathbf{S}_{n\kappa})\big\}_{\kappa=1}^{K}, \qquad \hat{\boldsymbol{m}}_{n\kappa} \in \mathbb{R}^p, \quad \mathbf{S}_{n\kappa} \in \mathbb{R}^{p \times p},$$
where $\hat{\boldsymbol{m}}_{n\kappa}$ is the fitted component mean vector and $\mathbf{S}_{n\kappa}$ is the within-component covariance matrix. After alignment (using the Hungarian procedure \eqref{eq:hungarian_alignment} with the multivariate $\mathcal{W}_2^2$ cost \eqref{eq:W2_cost}, since sorting is not available without a canonical ordering on $\mathbb{R}^p$), we relabel to obtain $(\hat{\boldsymbol{m}}_{nk}, \mathbf{S}_{nk})$ for $k = 1, \ldots, K$.
For each component $k$ and each output dimension $j \in \{1, \ldots, p\}$, we define a scalar GP training set $\mathcal{D}_{k}^{(j)} := \big\{\big(x_n,\; [\hat{\boldsymbol{m}}_{nk}]_j,\; [\mathbf{S}_{nk}]_{jj}\big)\big\}_{n=1}^{N},$ where $[\hat{\boldsymbol{m}}_{nk}]_j$ is the $j$-th coordinate of the aligned component mean and $[\mathbf{S}_{nk}]_{jj}$ is the corresponding diagonal entry of the within-component covariance, used as the heteroscedastic noise variance. We train an independent GP on each $\mathcal{D}_k^{(j)}$, yielding $Kp$ scalar GP models in total. The posterior predictive for the $j$-th coordinate of the $k$-th component mean at input $x$ is
$[g_k(x)]_j \mid (h_k^{(j)}, \mathcal{D}_k^{(j)}) \sim \mathcal{N}\!\big(\mu_{k}^{(j)}(x),\, \nu_{k}^{(j)}(x)\big),$
where $\mu_k^{(j)}$ and $\nu_k^{(j)}$ are the standard GP predictive mean and variance from the $j$-th coordinate model of component $k$.
Since the $p$ coordinate GPs within each component are trained independently, their joint posterior is a product of independent Gaussians. Marginalizing the latent component mean $\boldsymbol{g}_k(x) \in \mathbb{R}^p$ against a diagonal Gaussian posterior and a diagonal within-component likelihood covariance generalizes \eqref{eq:q_nk_density_impl} to
\begin{equation}
q_{*k}(y^*; h_k^*) = \mathcal{N}\Big(y^* \Big| \boldsymbol{\mu}_{*k}, \boldsymbol{\Sigma}_{*k}\Big),
\label{eq:q_nk_multivariate}
\end{equation}
where
$$\boldsymbol{\mu}_{*k} := \big(\mu_k^{(1)}(x_*),\, \ldots,\, \mu_k^{(p)}(x_*)\big)^\top \in \mathbb{R}^p,$$
$$\boldsymbol{\Sigma}_{*k} := \mathrm{diag}\!\Big(\nu_k^{(1)}(x_*) + \bar{s}^2_{k,1},\;\; \ldots,\;\; \nu_k^{(p)}(x_*) + \bar{s}^2_{k,p}\Big) \in \mathbb{R}^{p \times p},$$
and $\bar{s}^2_{k,j} := \frac{1}{N}\sum_{n=1}^{N} [\mathbf{S}_{nk}]_{jj}$ is the training-set average of the within-component variance for component $k$ along output dimension $j$. As in the scalar case (Section~\ref{sec:prediction}), the averaged variance $\bar{s}^2_{k,j}$ could be replaced by an input-dependent estimate when the within-component spread varies substantially across the input domain. The full GGMP predictive density at a new input $x_*$ is then
\begin{equation}
q(y \mid x_*;\, h^*, w^*) = \sum_{k=1}^{K} w_k^*\, \mathcal{N}\!\Big(y \,\Big|\, \boldsymbol{\mu}_{*k},\; \boldsymbol{\Sigma}_{*k}\Big).
\label{eq:ggmp_pred_multivariate}
\end{equation}
This is a $K$-component Gaussian mixture in $\mathbb{R}^p$ with diagonal component covariances, and can be evaluated, sampled, and differentiated in closed form. The total number of GPs is $Kp$, each of size $N$, so the training cost is $O(KpN^3)$ and per-input prediction cost is $O(KpN^2)$.
Two remarks are in order. First, the diagonal structure of $\boldsymbol{\Sigma}_{*k}$ arises because coordinate GPs are trained independently; it does not assume that the output dimensions are uncorrelated under the true data-generating process. Cross-output correlation is captured implicitly through the mixture structure: even though each component has a diagonal covariance, the mixture \eqref{eq:ggmp_pred_multivariate} can represent arbitrary correlations between output dimensions, since a Gaussian mixture with diagonal components is dense in the space of continuous densities on $\mathbb{R}^p$ as $K \to \infty$. For finite $K$, the approximation quality depends on whether the true cross-output dependence structure is adequately captured by the component means and weights. In settings where strong within-component cross-output correlation is present and $K$ is small, one could instead train a multi-output GP per component to obtain a full (non-diagonal) $\boldsymbol{\Sigma}_{*k}$, at the cost of additional modeling and computational complexity.
Second, the weight optimization \eqref{eq:weight_opt} and the distributional log-likelihood \eqref{eq:ggmp_dens_obj_impl} carry over without modification: the scalar density evaluations $q_{nk}(y; h_k)$ are simply replaced by their multivariate counterparts $\mathcal{N}(y \mid \boldsymbol{\mu}_{nk}, \boldsymbol{\Sigma}_{nk})$ throughout, and concavity in $w$ is preserved since the argument (log of a nonnegative affine function of $w$) is dimension-independent.

\subsection{Computational Cost}
\label{app:runtimes}

All experiments were run on a single CPU node of the Perlmutter supercomputer at NERSC (AMD EPYC 7763, 128 cores, 512 GB RAM); no GPU acceleration was used. The GGMP pipeline parallelizes naturally at the GP training stage: each of the $K$ component GPs is independent, so we assign one worker per component and train all $K$ models concurrently. The remaining stages run serially on the same node. MDN baselines were trained using PyTorch with data-parallel batching across available CPU cores.

The reported GP training times are wall-clock makespans, bounded by the slowest component rather than the sum of its components. In practice, the makespan can increase with $K$ because different components may require different optimization effort, raising the probability of at least one slow GP. Across the synthetic and temperature experiments, GP training dominates, consistent with the $O(KpN^3)$ complexity analysis in Section~\ref{sec:training}. In the manufacturing experiment, where $N$ is small but $T \approx 25{,}000$, mixture fitting and alignment scale strongly with $K$ and $T$ and becomes the primary cost at larger $K$; weight optimization is also non-negligible. Prediction remains negligible in all settings.

\begin{table}[H]
\centering
\small
\begin{tabular}{c|cccc|c||c}
\hline
$K$ & GMM + Align & GP Training & Weight Opt & Prediction & GGMP Total & MDN Total \\
\hline
1 & $<\!1$ & 39 & 0 & 1 & $<\!41$ & 1,318 \\
3 & 9 & 64 & 3 & 1 & 77 & 1,241 \\
5 & 21 & 79 & 9 & 1 & 110 & 1,212 \\
10 & 50 & 132 & 9 & 1 & 192 & 1,178 \\
25 & 144 & 242 & 12 & 1 & 399 & 1,198 \\
\hline
\end{tabular}
\caption{Wall-clock runtimes (seconds) for the synthetic function experiment
($N=300$, $N*T=600{,}000$, $p=1$).}
\label{tab:runtime_synthetic}
\end{table}

\begin{table}[H]
\centering
\small
\begin{tabular}{c|cccc|c||c}
\hline
$K$ & GMM + Align & GP Training & Weight Opt & Prediction & GGMP Total & MDN Total \\
\hline
1 & $<\!1$ & 593 & 0 & 57 & $<\!651$ & 26,613\\
3 & 177 & 4,842 & 28 & 58 & 5,105 & 26,452\\
5 & 329 & 7,150 & 108 & 63 & 7,650 & 27,130\\
10 & 717 & 13,317 & 206 & 70 & 14,310 & 26,828\\
25 & 1,482 & 20,935 & 1,059 & 82 & 23,558 & 27,164\\
\hline
\end{tabular}
\caption{Wall-clock runtimes (seconds) for the U.S.\ temperature extremes experiment
($N \approx 7{,}000$, $N*T_{\text{GGMP}} \approx 50,000,000$, $N*T_{\text{MDN}} = 10,000,000$, $p=1$).}
\label{tab:runtime_temperature}
\end{table}

\begin{table}[H]
\centering
\small
\begin{tabular}{c|cc|cc|c|c||c}
\hline
$K$ & GMM + Align & GP Training
 & \multicolumn{2}{c|}{Weight Opt}
 & Prediction & GGMP Total & MDN Total \\
 &  &
 & $w^*$ & $w^*(x)$
 &  & ($w^*$) &  \\
\hline
1 & $<\!1$ & 3 & 0 & 0 & $<\!1$ & $<\!5$ & 1,009 \\
3 & 15 & 8 & 2 & 19 & 3 & 28 & 1,018 \\
5 & 24 & 11 & 7 & 204 & 5 & 47 & 997 \\
10 & 42 & 18 & 41 & 528 & 6 & 107 & 1,035 \\
25 & 256 & 26 & 123 & 929 & 7 & 412 & 991 \\
\hline
\end{tabular}
\caption{Wall-clock runtimes (seconds) for the additive manufacturing experiment
($N=24$, $N*T\approx600{,}000$, $p=2$). Weight optimization is
reported for shared weights ($w^*$) and input-dependent weights
($w^*(x)$); the GGMP total uses $w^*$.}
\label{tab:runtime_manufacturing}
\end{table}

\subsection{MDN Architecture and Training Details}
\label{app:mdn_details}

All MDN baselines use a fully connected network with two hidden
layers of 64 units each and $\tanh$ activations. The output layer
produces $K$ component means, $K$ variances (parameterized via
softplus), and $K$ mixture weights (parameterized via softmax).
Training minimizes the negative log-likelihood of individual
$(x,y)$ pairs using Adam with a learning rate of $10^{-3}$, a
batch size of 512, and 500 epochs. All runs use a fixed random seed of 42.
The same architecture and hyperparameters are used across all
experiments and all values of
$K \in \{1, 3, 5, 10, 25\}$; no per-$K$ tuning is performed.

\subsection{Sample Fitting for the Synthetic Function}
\label{app:synth_pred_fits}

\begin{figure}[h!]
 \centering
 \includegraphics[width=\linewidth]{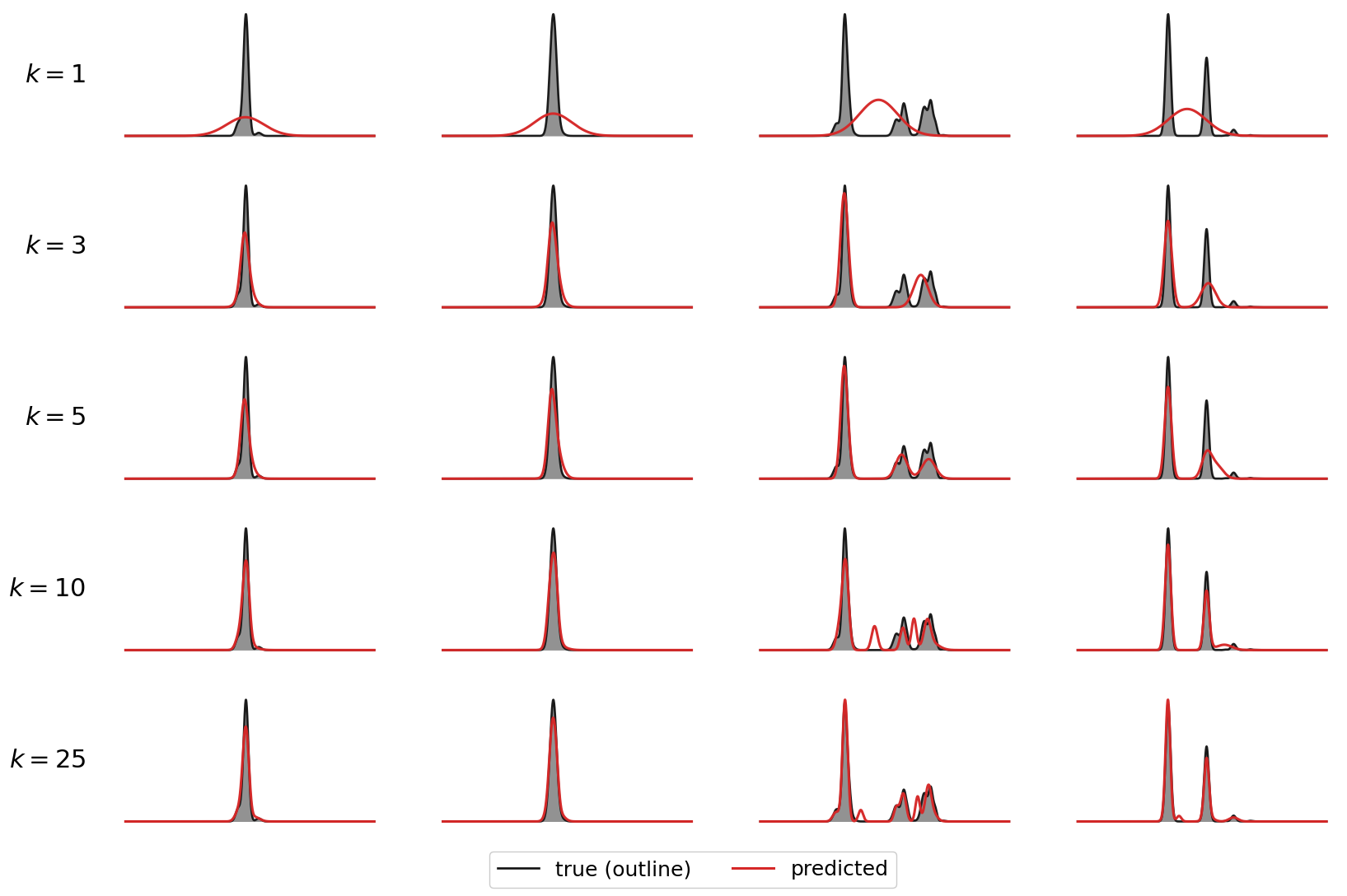}
 \caption{\emph{Held-out distribution reconstruction across mixture complexity for the synthetic function.} Each row shows a different number of mixture components; each column is one of four approximately equally spaced held-out inputs. In every panel, the ground-truth is shown in gray, and the model prediction is red. Increasing $K$ improves local shape fidelity—especially multimodal structure—while low $K$ underfits fine-scale features. When $K < K_{\text{true}}$, we witness an averaging phenomenon; and when $K > K_{\text{true}}$, we observe a distribution that looks like it could have come from a mixture with fewer components, indicating that components are likely stacked.}
 \label{fig:synth_1d_fits}
\end{figure}

\end{document}